\newtheorem{theorem}{Theorem}
\title{Generative Occupancy Fields for 3D Surface-Aware Image Synthesis}
\def \cuhk{$^\dagger$}
\def \mpi{$^\ddag$}
\def \ntu{$^\S$}
\author{%
  Xudong Xu\cuhk \qquad Xingang Pan\mpi \qquad Dahua Lin\cuhk \qquad Bo Dai\ntu \\
  \cuhk CUHK - SenseTime Joint Lab, The Chinese University of Hong Kong \\
  \mpi Max Planck Institute for Informatics \ \ntu S - Lab, Nanyang Technological University \\
  {\tt\small \cuhk \{xx018, dhlin\}@ie.cuhk.edu.hk \ \mpi xpan@mpi-inf.mpg.de \ \ntu bo.dai@ntu.edu.sg}
}
\begin{document}

\maketitle


\begin{abstract}

  The advent of generative radiance fields
  has significantly promoted the development of 3D-aware image synthesis.
  The cumulative rendering process in radiance fields makes training these generative models much easier since gradients are distributed over the entire volume, but leads to diffused object surfaces.
  In the meantime,
  compared to radiance fields occupancy representations could inherently ensure deterministic surfaces.
  However, if we directly apply occupancy representations to generative models,
  during training they will only receive sparse gradients located on object surfaces
  and eventually suffer from the convergence problem.
  In this paper, we propose \textbf{Generative Occupancy Fields (GOF)},
  a novel model based on generative radiance fields that can learn compact object surfaces without impeding its training convergence.
  The key insight of GOF is a dedicated transition from the cumulative rendering in radiance fields to rendering with only the surface points
  as the learned surface gets more and more accurate.
  In this way, GOF combines the merits of two representations in a unified framework.
  In practice,
  the training-time transition of \emph{start from radiance fields and march to occupancy representations} is achieved in GOF
  by gradually shrinking the sampling region in its rendering process
  from the entire volume to a minimal neighboring region around the surface.
  Through comprehensive experiments on multiple datasets,
  we demonstrate that GOF can synthesize high-quality images with 3D consistency
  and simultaneously learn compact and smooth object surfaces.
  Our code is available at \url{https://github.com/SheldonTsui/GOF_NeurIPS2021}.
  
\end{abstract}


\section{Introduction}

Deep generative adversarial networks~\cite{goodfellow2014generative, brock2018large, karras2019style, karras2020analyzing}
have demonstrated their superiority in synthesizing photorealistic and striking images.
However, these models are often constrained in the 2D domain,
struggling to generate 3D consistent images,
let alone grasping the underlying 3D object shapes.
3D-aware image synthesis thus becomes an appealing and promising choice
as it learns a 3D representation explicitly from a collection of unposed images.
Consequently, it can not only synthesize 3D consistent images by manually controlling the rendering camera poses,
but also pave the way for various downstream tasks such as shape editing and relighting.

Inspired by the success of neural radiance fields (NeRF) \cite{mildenhall2020nerf} in 3D scene modeling,
recent 3D-aware generative models, referred to as generative radiance fields (GRAFs), have applied NeRF as the explicit 3D representation for image synthesis \cite{chan2020pi,schwarz2020graf}.
With the help of NeRF, they are capable of hallucinating photorealistic images in a 3D consistent manner.
Moreover,
since NeRF holds the superior ability for rendering translucent objects by compositing colored densities along each ray in its volume rendering process,
it also significantly facilitates the training of GRAFs as gradients are naturally distributed over the entire volume.
However, they still incur an inevitable incapacity of capturing an accurate and compact object surface.
As shown in Fig.~\ref{fig:teaser}(a),
the state-of-the-art GRAF model pi-GAN is prone to predict diffused object surfaces,
as the volume densities are smoothly spread around the surfaces.
Such diffused surfaces could significantly hamper the applications of GRAF models in downstream tasks such as shape recovery.
Moreover,
under different light conditions,
the artifacts of surfaces could be amplified and inherited through the rendering process,
resulting in synthesized images that are messy and faulty.

In this work,
we propose \textbf{Generative Occupancy Fields (GOF)},
a novel GRAF-like image synthesis model that can learn compact object surfaces.
GOF is inspired by the design of occupancy networks \cite{mescheder2019occupancy}
that implicitly represents a 3D surface with the continuous decision boundary of a neural classifier.
In this way, occupancy networks are capable of effectively locating surfaces via root-finding
and encouraging the compactness of modeled surfaces inherently.
However,
GOF avoids directly applying such a design to 3D-aware image synthesis.
While occupancy networks require precise object masks to train \cite{yariv2020multiview,oechsle2021unisurf},
a more crucial factor is that they rely on the surface points for differentiable rendering~\cite{niemeyer2020differentiable, jiang2020sdfdiff, yariv2020multiview, liu2020dist}.
A generative model equipped with occupancy representations will thus meet severe convergence problems
during training due to the sparsity of gradients.
To unify the merits of both NeRF and occupancy networks for 3D-aware image synthesis,
GOF adopts the design of GRAFs
and at the same time leverages a nontrivial transition from the cumulative rendering to rendering with only the surface points,
\ie~\emph{start from radiance fields and march to occupancy representations}.
Specifically,
GOF will reinterpret the alpha values in the cumulative rendering process as occupancy values,
so that it can locate the learned surface via root-finding.
Subsequently,
it can naturally encourage the compactness of learned surfaces by gradually shrinking the sampling region in the rendering process from the entire volume to a minimal neighboring region around the surface.

\begin{figure}[!t]
	\centering
	\subfloat[Cumulative Rendering Weights]{
		\begin{minipage}[b]{.38\textwidth}
			\flushright
			\includegraphics[width=1\textwidth]{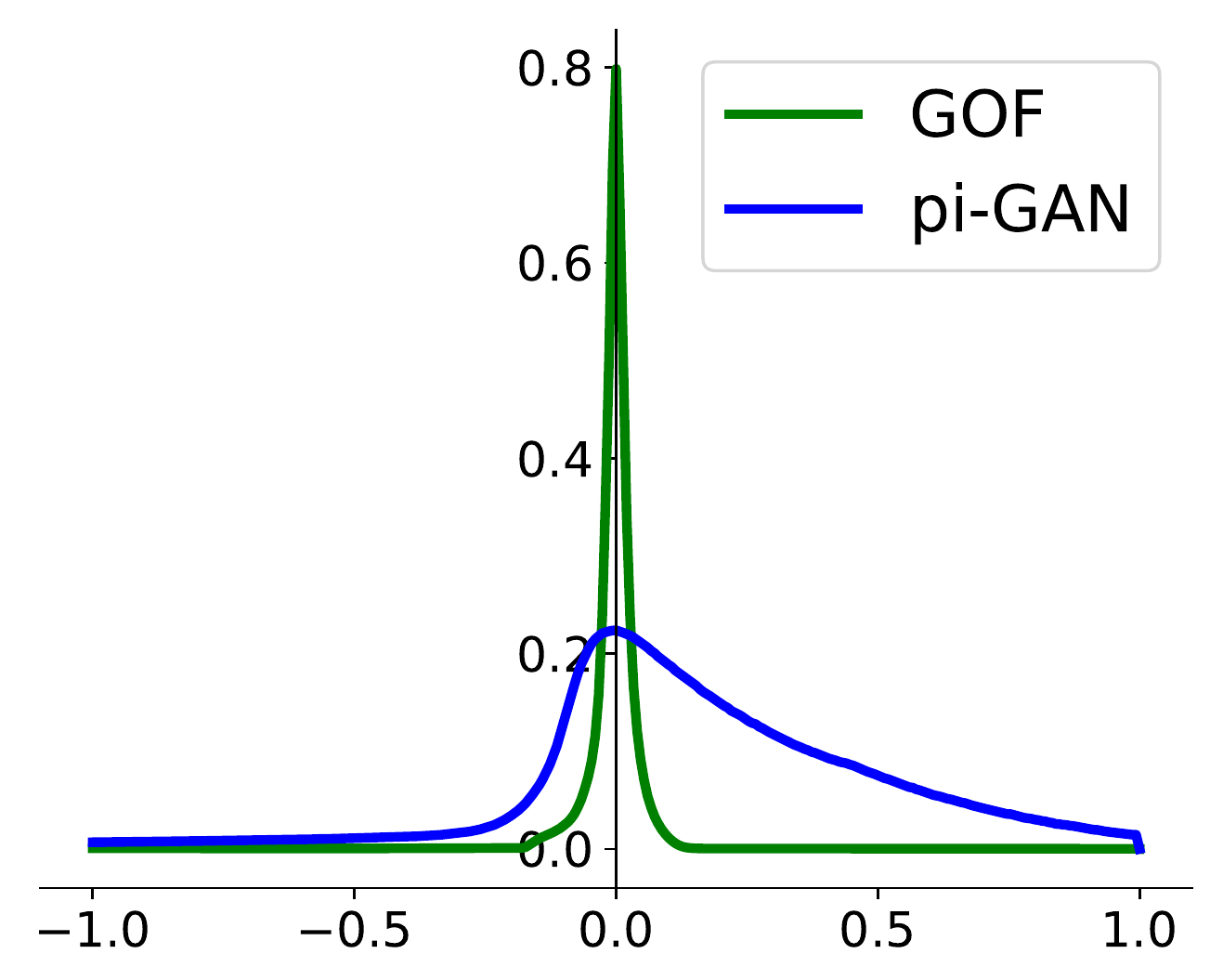}
		\end{minipage}
	}
	\subfloat[Performance Comparison]{
		\begin{minipage}[b]{.58\textwidth}
			\includegraphics[width=1\textwidth]{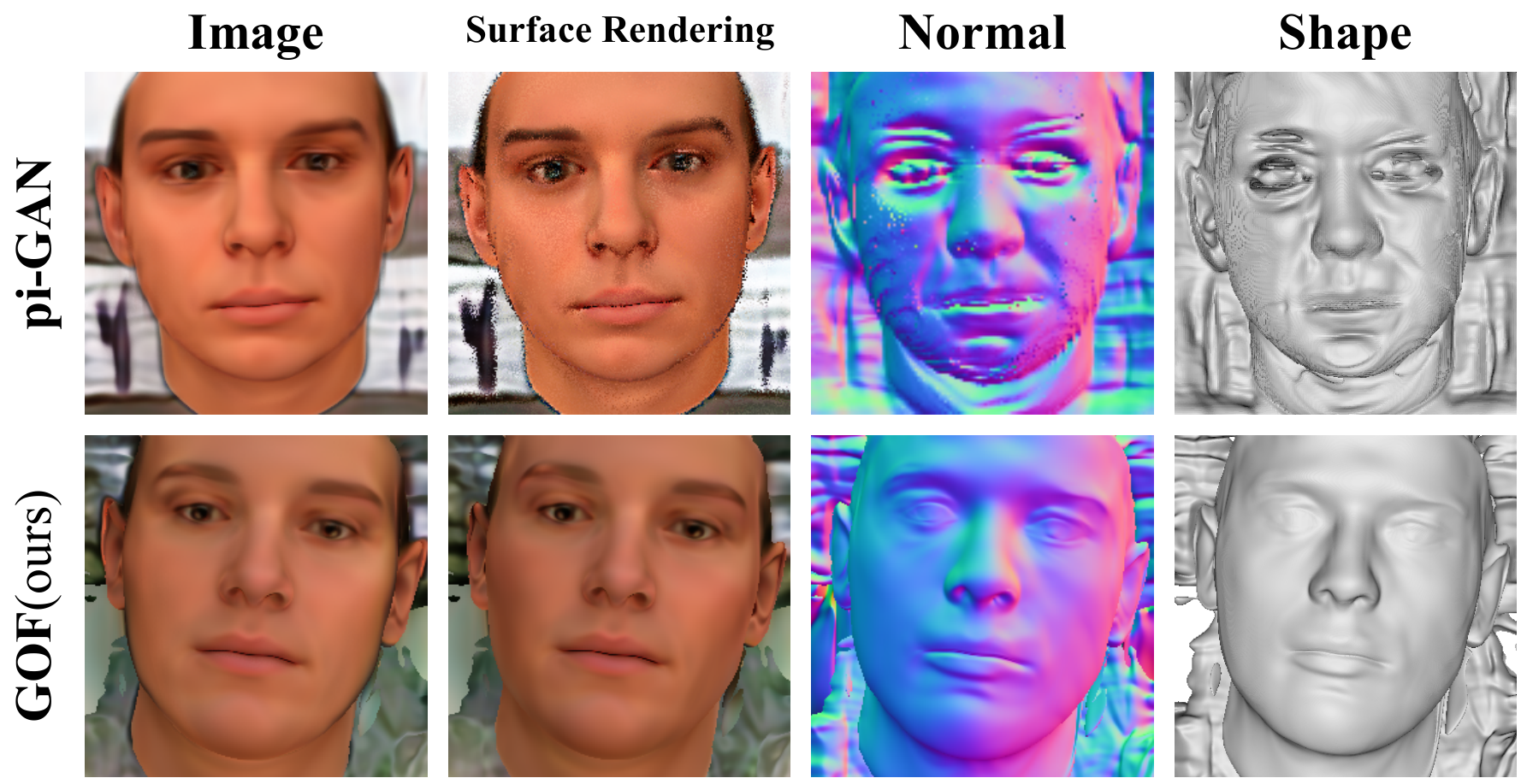}
		\end{minipage}
	}
	\caption{
		\textbf{(a)} The cumulative rendering weights (color weights) of our approach GOF more focus on the surface (y-axis) than previous methods like pi-GAN~\cite{chan2020pi}, which indicates our predicted volume densities more concentrate on the object surfaces.
		\textbf{(b)} Owing to the diffused volume densities, the preceding method pi-GAN captures messy surface normals and object shapes. Moreover, the image rendered only with the surface points is quite noisy.
		In contrast, more surface-centralized densities predicted by our method ensure compact and smooth object surfaces thus enable a high-quality surface rendering during inference. (\textbf{Zoom in for best view})
	}
	\label{fig:teaser}
	\vspace{-20pt}
\end{figure}

Thanks to the unified integration of radiance fields and occupancy representations,
GOF can benefit from the representation effectiveness of radiance fields
while ensuring the compactness of learned object surfaces through the shrinking process.
As presented in Fig.~\ref{fig:teaser}(a),
in GOF the distribution of cumulative rendering weights concentrates more closely around object surfaces compared to that of pi-GAN,
eventually resulting in a compact and smooth surface.
Moreover,
GOF can thus alternatively render an image only with points on the learned surfaces like occupancy networks as illustrated in Fig.~\ref{fig:teaser}~(b).
And during inference such a rendering scheme has the potential to alleviate the burden of sampling a large number of points along each ray for synthesizing a single image.
Through exhaustive experiments on synthetic and real-world datasets,
we demonstrate that GOF can achieve state-of-the-art performance on 3D-aware image synthesis.
Meanwhile, it is capable of capturing compact and accurate 3D shapes
that empower its applications in various downstream tasks such as
3D shape reconstruction.
We validate this point by quantitative results of 3D shape reconstruction on the Synface dataset.
Finally, we have also verified the ability of GOF in rendering high-quality images with only the surface points,
which is hardly achievable in previous approaches.


\section{Related Work}

\textbf{Neural implicit function for 3D representations.}
A plethora of works~\cite{park2019deepsdf,mescheder2019occupancy,chen2019learning,genova2019learning,michalkiewicz2019implicit,saito2019pifu,saito2020pifuhd,sitzmann2020metasdf,sitzmann2019srns} has exploited neural implicit functions for 3D geometry modeling.
Among these works,
neural radiance fields (NeRF) \cite{mildenhall2020nerf} has attracted growing attention
due to its compelling results on novel view synthesis.
It leverages an MLP network to approximate the radiance fields of static 3D scenes.
And by learning to reconstruct existing views, it is capable of capturing 3D geometric details from only 2D supervision.
A series of succeeding variants of NeRF have been proposed to improve it,
including utilizing the spatial sparsity to reduce its computational complexity \cite{liu2020neural, yu2021plenoctrees},
refining the rendering process to improve its efficiency~\cite{neff2021donerf, hedman2021baking},
as well as adopting reflectance decompositions to enhance its modeling capacity \cite{boss2020nerd, srinivasan2020nerv}.
There are also works that capitalize on the differentiable rendering of neural implicit functions for 3D reconstruction \cite{niemeyer2020differentiable, jiang2020sdfdiff,yariv2020multiview,liu2020dist,atzmon2019controlling, kellnhofer2021neural}.
Specifically, SDFDiff~\cite{jiang2020sdfdiff} relies on the interpolation of eight neighboring SDF samples around the surface intersection to obtain the derivatives,
while Atzmon \etal~\cite{atzmon2019controlling} use a sample network to relate samples' positions to network parameters
and thus achieve an improved generalization ability.
More interestingly, by adopting occupancy representations,
DVR~\cite{niemeyer2020differentiable} and IDR~\cite{yariv2020multiview} show volumetric rendering is inherently differentiable
so that network parameters can be optimized directly with derived analytic gradients.
Different from methods aforementioned above,
GOF is a generative model for 3D-aware image synthesis
that can learn 3D representations from a set of 2D images with unknown camera poses.

\textbf{Generative 3D-aware image synthesis.}
In order to synthesize 3D consistent images,
researchers have explored a lot on how to incorporate 3D representations into the classical GAN model~\cite{goodfellow2014generative}.
Some methods~\cite{wu2016learning,chen2021ngp,zhu2018visual} resort to learning from 3D data directly,
yet the requirement of 3D supervision limits their practical applicability.
A more appealing alternative is thus learn from unposed 2D images in an unsupervised manner.
Preceding works along this line of research
adopt voxels as their intermediate 3D representations \cite{liao2020towards,BlockGAN2020,nguyen2019hologan}
and achieve explicit control over the pose of synthesized images.
Inspired by the superior representation capacity of radiance fields over voxels,
recent attempts \cite{chan2020pi,schwarz2020graf,Niemeyer2020GIRAFFE} have replaced voxels
with neural radiance fields \cite{mildenhall2020nerf}
to improve the fidelity of synthesized 3D consistent images.
Despite the striking performance,
these models, referred to as generative radiance fields (GRAFs), tend to predict diffused object surfaces,
which impedes its applicability in various downstream tasks.
In this work,
GOF aims at resolving this problem of GRAFs by
combining them with the perspective of occupancy networks \cite{mescheder2019occupancy}
and recent successes of recovering smooth and accurate shapes from natural images~\cite{wu2020unsupervised, pan2020gan2shape, sahasrabudhe2019lifting, li2020self}.
Recently, three concurrent works, UNISURF~\cite{oechsle2021unisurf}, NeuS~\cite{wang2021neus} and VolSDF~\cite{yariv2021volume}, also combine implicit surfaces and radiance fields in a unified framework, sharing similar spirits with our proposed GOF but different in tasks and focuses.
Specifically,
They focus on multi-view 3D reconstruction and attempt to alleviate the requirement of training-time precise masks
through the integration of radiance fields and occupancy representations.
Nevertheless, they still require images with ground-truth poses for training.
By contrast,
GOF targets on the challenging task of 3D-aware image synthesis,
where the synthesized images should be not only natural and vivid, but also consistent in the 3D space.
By integrating radiance fields and occupancy representations,
GOF is able to facilitate the convergence of GRAFs and ensure the compactness of learned object surfaces.
Compared to existing GRAFs, the applicability of GOF is thus significantly broadened.


\section{Methodology}

We propose \textbf{generative occupancy fields (GOF)}, a novel synthesis model, belonging to generative radiance fields and aiming to learn from unposed images.
Conditioned on a latent code $\vz \sim p_\vz$,
our generator $g_\theta$ can generate a 3D radiance field $\mR$,
from which we can render a realistic image with a sampled camera pose $\xi \sim p_\xi$
and simultaneously recover smooth and compact object surfaces.
In the following, we first present the background of neural radiance fields,
and then introduce our proposed GOF model in detail.

\subsection{Neural Radiance Fields}
We adopt neural radiance fields (NeRF) as our explicit 3D representation for image synthesis,
owing to its strong performance in novel view synthesis on complex scenes.
NeRF represents a static scene as per-point volume densities and view-dependent RGB colors.
Given a 3D point $\vx \in \Rbb^3$ in space and a view direction $\vd \in \Rbb^3$,
NeRF capitalizes on a multi-layer perceptron (MLP) to predict the volume density $\sigma(\vx) \in \Rbb$ and the emitted color $\vc(\vx, \vd) \in \Rbb^3$. 
To render a novel view for the scene,
NeRF leverages the classic volume rendering technique~\cite{kajiya1984ray}
to estimate the color of each pixel.
It starts by accumulating the colored densities of $N$ points $\{ \vx_i = \vo + t_i \vd \}$ sampled within near and far bounds $[t_n, t_f]$ along the camera ray $\vr(t) = \vo + t\vd$,
where $\vo$ stands for the camera origin.
The integrated color is then estimated via alpha composition as follows:
\begin{equation}
\hat{\mC}(\vr) = \sum_{i=1}^{N} T_i \Big( 1 - \exp\big(-\sigma(\vx_i) \delta_i \big) \Big) \vc(\vx_i, \vd),
\; \text{where} \;
T_i = \exp(-\sum_{j=1}^{i-1}\sigma(\vx_j) \delta_j), \label{eq:nerf}
\end{equation}
where $\delta_i = |x_{i+1} - x_i|$ is the distance between adjacent points.
Note that, equation~\eqref{eq:nerf} is naturally differentiable
and NeRF can be directly optimized through the reconstruction error of existing views.

\begin{figure}[t!]
	\centering
	\includegraphics[width=14cm]{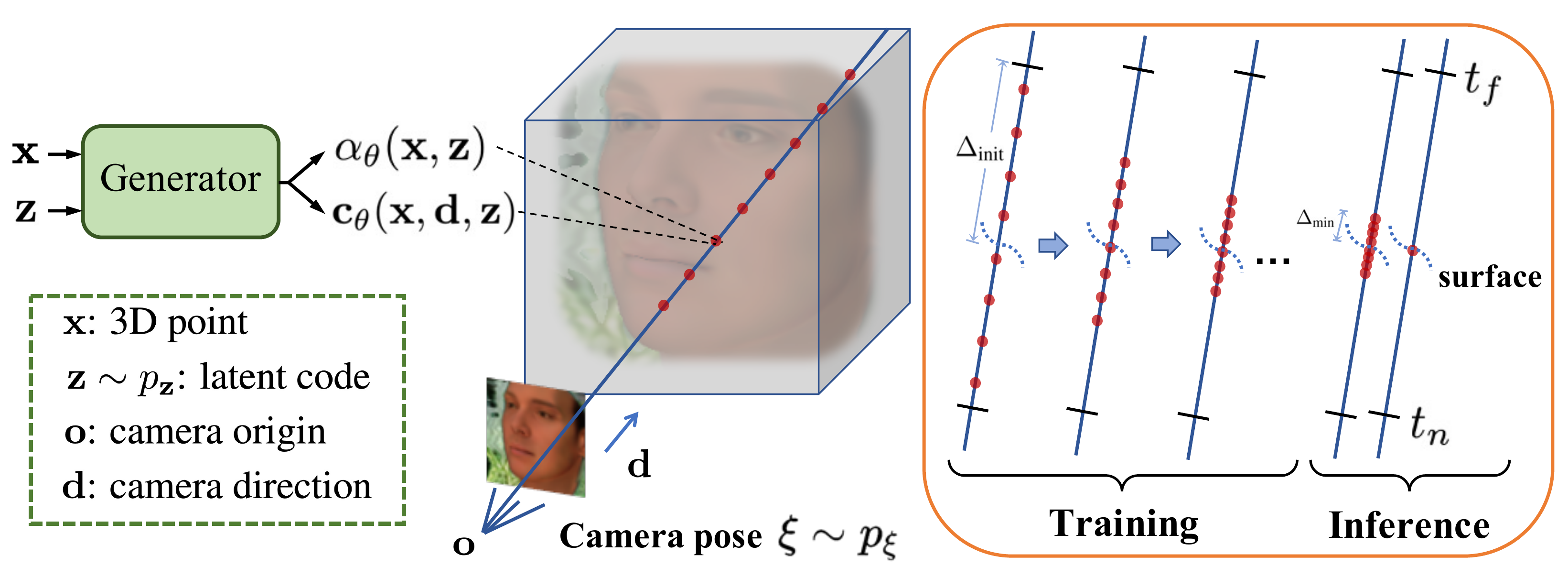}
	\caption{\textbf{Shrinking process}. During the training, the sampling interval $\Delta$ is initially a half of the distance between near $t_n$ and far bounds $t_f$ and shrinks gradually to a pre-defined value $\Delta_\text{min}$.
	For inference, we can use cumulative rendering by sampling points in the minimal interval $\Delta_\text{min}$ and alternatively render only with the surface points.
	}
	\label{fig:shrink}
\end{figure}

\subsection{3D Surface-Aware Image Synthesis via Generative Occupancy Fields}
\label{sec:gofmethod}

To apply NeRF as the 3D representation,
the proposed generative occupancy fields (GOF) incorporates an additional latent code $\vz \sim p_\vz$ into NeRF,
such that synthesizing an image follows a reformulated cumulative rendering process:
\begin{equation}
\hat{\mC}(\vr, \vz) =
\sum_{i=1}^{N} T_i \Big( 1 - \exp\big(-\sigma_\theta(\vx_i, \vz) \delta_i \big) \Big) \vc_\theta(\vx_i, \vd, \vz),
\; \text{where} \;
T_i = \exp(-\sum_{j=1}^{i-1}\sigma_\theta(\vx_j, \vz) \delta_j). \label{eq:gof_render}
\end{equation}
However, directly training GOF according to Eq.\eqref{eq:gof_render} fails to maintain the surface compactness as reported in previous approaches \cite{chan2020pi,schwarz2020graf}.
Actually, such a defect arises from an inevitable ``shape-color ambiguity'' of the cumulative rendering process,
\ie, small perturbations on surfaces still lead to realistic RGB images which are enough to fool the discriminator.

Owing to the constrained range of poses seen at training,
the discriminator is less motivated to further concentrate the color weights $w_i = T_i \big( 1 - \exp(-\sigma_\theta(\vx_i, \vz) \delta_i) \big)$ aforementioned in Fig.~\ref{fig:teaser}~(a)~on the exact object surface.
On the other hand,
we observe that
although leading to diffused surfaces at the end,
color weights $w_i$ gradually concentrate around the object surface as the training proceeds.
Inspired by this observation,
in GOF we propose a training-time operation to facilitate the concentration of color weights $w_i$.
The basic idea is gradually shrinking the sample region in the cumulative rendering process from the entire volume to a narrow interval around the surface,
so that color weights are enforced to continuously move towards the exact surface.

To enable the proposed training-time shrinking process,
GOF is required to locate the surface by thresholding the predicted densities $\sigma_\theta(\vx, \vz)$,
assuming points on the surface have the largest densities,
However,
values of the densities predicted in generative radiance fields could range from $0$ to $50$,
making it hard to determine an effective threshold $\tau$ during the whole training period.
On the other hand,
in the cumulative rendering process shown in Eq.\eqref{eq:gof_render},
we found that \emph{the intermediate alpha values} used for numerical stability inherently fall in a fixed value range as:
\begin{equation}
\alpha_\theta(\vx_i, \vz) = 1 - \exp(-\sigma_\theta(\vx_i, \vz)\delta_i) \in [0, 1].
\label{eq:reinterpret}
\end{equation}
More importantly,
these alpha values $\alpha_\theta (\vx, \vz)$ come close to $1$ for points in the occupied space while approaching $0$ for points in the free space,
making them resemble the occupancy values \cite{mescheder2019occupancy} in both quantity and semantics.
Inspired by the similarity,
we thus propose to reformulate generative radiance fields by predicting alpha values $\alpha_\theta(\vx, \vz)$ directly instead of volume densities $\sigma_\theta(\vx, \vz)$.
In the mean time,
we reinterpret the alpha values as occupancy values,
and subsequently locate surfaces with root-finding,
a more effective strategy originated in occupancy networks \cite{niemeyer2020differentiable}.
According to the above reformulation and reinterpretation,
we thus dub our method as \textbf{Generative Occupancy Fields}.

As GOF estimates alpha values instead of volume densities,
the original volume rendering process in Eq.\eqref{eq:gof_render} conditioned on the latent code $\vz$ is reformulated as
\begin{equation}
\hat{\mC}(\vr, \vz) = \sum_{i=1}^{N} \alpha_\theta(\vx_i, \vz) \prod_{j<i} \big(1 - \alpha_\theta(\vx_j, \vz) \big) \vc_\theta(\vx_i, \vd, \vz),
\label{eq:gof_render_alpha}
\end{equation}
where the value range of $\alpha_\theta(\vx_i, \vz)$ is guaranteed with a sigmoid function.
And to locate the surface via root-finding,
for a specific ray $\vr(t) = \vo + t\vd$
we will evenly sample $M$ points $\{\vx_k = \vo + t_k \vd;k=1,...,M\}$
that partition the entire volume $[t_n, t_f]$ into $M$ equally-spaced bins.
After obtaining the corresponding alpha values $\{\alpha_\theta(\vx_k, \vz); k=1,...,M\}$ by querying the generator $g_\theta$,
the surface $\cS$ is located in the $k^\cS$-th bin where $\alpha_\theta$ changes \emph{for the first time} from free space $(\alpha_\theta < \tau)$ to occupied space $(\alpha_\theta < \tau)$:
\begin{equation}
k^\cS = \argmin_k \big( \alpha_\theta(\vx_k, \vz) < \tau \le \alpha_\theta(\vx_{k+1}, \vz)\big),
\end{equation}
where $\tau$ is a pre-defined threshold.
In practice, we empirically set $\tau$ as $0.5$.
In order to find the surface point $\vx_s = \vo + t_s \vd$ more precisely,
we further apply the above secant method iteratively for $m_s$ times,
resulting in a fine-grained bin $[\vx_{k^\cS}, \vx_{k^\cS + 1}]$.
It's worth noting that the $M$ sampled points are only used for root-finding.
Thus they do not require the computation of gradients in the implementation.

Based on the located surface $\cS$,
we can thus successfully conduct the proposed shrinking process,
which is schematically elaborated in Fig.~\ref{fig:shrink}.
Specifically,
when sampling $N$ points for Eq.\eqref{eq:gof_render_alpha} at each training step,
we will only sample within a region neighboring the surface $[t_s - \Delta, t_s + \Delta]$:
\begin{equation}
t_i \sim \cU \left[ t_s - \Delta + \frac{2i-2}{N} \Delta, t_s - \Delta + \frac{2i}{N} \Delta \right],
\; \text{where} \; i = 1, 2, ..., N.
\end{equation}
$\Delta$ is the sampling interval,
which is set to $\Delta_\text{init}=(t_f - t_n)/2$ at the beginning,
a half of the distance between near $t_n$ and far bounds $t_f$.
And it will decrease monotonically with an exponential decay rate $\gamma$
until it drops to a pre-defined minimal value $\Delta_\text{min}$.
Formally, $\Delta_n = \max(\Delta_\text{init} \exp(-\gamma n), \Delta_\text{min})$ for $n$-th decaying step.
Additionally,
during training when the estimated $t_s$ is too close to the near or the far bound
so that the sampling region $[t_s - \Delta, t_s + \Delta]$ exceeds the original range $[t_n, t_f]$,
we will shift the region $[t_s - \Delta, t_s + \Delta]$ back to within $[t_n, t_f]$.
As shown in Fig.~\ref{fig:shrink},
at the beginning of training,
points sampled for Eq.\eqref{eq:gof_render_alpha} will cover the entire volume,
leading to dispersed gradients which facilitate the convergence of GOF.
And as the training goes, the predicted surface will become more and more accurate,
which is the outcome of gradually refining the sampling region,
and in turn also makes the above shrinking operation valid.

Thanks to the dedicated shrinking process,
the color weights $w_i$ can successfully concentrate on the object surface as illustrated in Fig.~\ref{fig:teaser}(a).
As a result, GOF is capable of synthesizing high-fidelity images in a 3D-consistent manner
and simultaneously capturing compact object surfaces.
During inference,
to synthesize an image under a random camera pose $\xi \sim p_\xi$,
the generator $g_\theta$ will fetch a truncated latent code $\hat{\vz}$ and sample $N$ points $\{\vx_i\}$ on each ray within the minimal region $[t_s - \Delta_\text{min}, t_s + \Delta_\text{min}]$ for the rendering as in Eq.\eqref{eq:gof_render_alpha}.
An important benefit of learning a compact object surface is that
we can effectively reduce the number of sampled points for rendering,
even using only one point on each ray, \ie~the surface point.
As shown in Fig.~\ref{fig:teaser}~(b),
the image rendered with only the surface point is almost indistinguishable from that with multiple points.
Such equivalence can be guaranteed theoretically when $\Delta_\text{min} \rightarrow 0$,
and we include the proof in the supplementary material.

\subsection{Loss Functions}

Instead of training on posed 2D images,
the proposed GOF leverages a corpus of unposed images for 3D-aware image synthesis,
where multiple loss functions are adopted.

\noindent\textbf{GAN Loss.}
Following pi-GAN~\cite{chan2020pi},
a GAN loss is used
where GOF synthesizes fake images by randomly sampling camera poses $\xi$ from a dataset-related distribution $p_\xi$
and rendering according to Eq.\eqref{eq:gof_render_alpha}.
Denote $I$ as a real image from the data distribution $p_\cD$,
the non-saturating GAN loss can be described as follows:
\begin{equation}
\begin{aligned}
\cL_\text{origin}(\theta_D, \theta_G) &= \mE_{\vz \sim p_\vz, \xi \sim p_\xi}
\left[ f \Big( D_{\theta_D} (G_{\theta_G} (\vz, \xi)) \Big) \right] \\
&+ \mE_{I \sim p_\cD} \left[ f(-D_{\theta_D}(I)) + \lambda |\nabla D_{\theta_D}(I)|^2 \right], \\
&\text{where} \; f(u) = -\log(1 + \exp(-u)).
\end{aligned}
\end{equation}
However, $\cL_\text{origin}$ alone is not sufficient to guide the training,
which may lead to messy images with smoke-like artifacts.
Therefore,
two more regularizations are incorporated to reduce artifacts and further smooth the learned surfaces.

\noindent\textbf{Normal Regularization.}
The first regularization is a prior on the surface normal smoothness,
which is specially useful for learning from 2D real-world images~\cite{niemeyer2020differentiable}.
In GOF,
this normal prior is only employed for the surface points $\vx_s \in \cS$ to encourage a natural and smooth surface:
\begin{equation}
\cL_\text{normal} = \sum_{\vx_s \in \cS} || \vn_\theta(\vx_s, \vz) - \vn_\theta(\vx_s + \vepsilon, \vz) ||_2,
\end{equation}
where $\vepsilon$ is a small random 3D perturbation and $\vn_\theta$ denotes the normal vector,
which can be computed by
$\vn_\theta(\vx, \vz) = \nabla_\vx \alpha_\theta(\vx, \vz) / || \nabla_\vx \alpha_\theta(\vx, \vz) ||_2$.

\noindent\textbf{Opacity Regularization.}
Since alpha values predicted in GOF can be regarded as occupancy values,
ideally the entropy of them should be $0$
so that $\alpha_\theta(\vx, \vz)$ values will equal $1$ for points in the occupied space and $0$ for points in the free space.
We thus apply the second opacity regularization,
aiming to reduce the entropy of predicted alpha values: 
\begin{equation}
\cL_\text{opacity} = \frac{1}{N} \sum_{i=1}^{N} \log(\alpha_\theta(\vx_i, \vz)) + \log(1 - \alpha_\theta(\vx_i, \vz)).
\end{equation}

In summary, the final loss function for training GOF can be written as:
\begin{equation}
\cL(\theta, \phi) = \cL_\text{origin}(\theta, \phi) + \lambda_\text{normal} \cL_\text{normal}
+ \lambda_\text{opacity} \cL_\text{opacity},
\end{equation}
where $\lambda_\text{normal}$ and $\lambda_\text{opacity}$ are both balancing coefficients.


\section{Experiments}

\textbf{Implementation Details.}
Unless stated otherwise,
in all experiments we set $N$, the number of points sampled for rendering, to $12$,
and set $M$, the number of bins used in root-finding, to $12$.
As discussed in Sec.\ref{sec:gofmethod},
we apply an iterative process in root-finding.
In practice, the number of iterations is set to $m_s=3$ times.
During inference,
GOF requires $M+m_s+N$ queries to obtain the color of a pixel,
while existing methods require $2N$ queries due to the use of a hierarchical sampling strategy.
Recall it is sufficient for GOF to sample only the surface point to render an image,
GOF is thus capable of using just $M+m_s+1$ queries,
potentially speeding up the rendering process in off-line applications.
More training and implementation details can be found in the supplemental material.

\textbf{Datasets.}
To assess our method comprehensively,
we conduct experiments on three datasets, namely CelebA~\cite{liu2015deep}, BFM~\cite{paysan20093d}, and Cats~\cite{zhang2008cat}.
Specifically,
CelebA is a high-resolution face dataset containing $200,000$ diverse face images.
Following pi-GAN~\cite{chan2020pi}, we crop all images in CelebA from the top of the hair to the bottom of the chin as a pre-processing step.
As for the Cats dataset,
it contains $6,444$ cat faces of size $128 \times 128$.
Finally,
BFM is a synthetic face dataset rendered with Basel Face Model,
where each face is paired with a ground-truth depth map,
making it a good benchmark for quantitatively evaluating the quality of learned surfaces.

\begin{table*}[!t]
	\centering
	\caption{\textbf{Quantitative results} ($128 \times 128$ px) on BFM, CelebA and Cats datasets, on three metrics Fr\'echet Inception Distance (FID), Inception Score (IS) and the weighted variance of sampled depth $\Sigma_{t_i}(\times 10^{-4})$.
	}
	\begin{tabular}{l ccc ccc ccc}
		\toprule
		& \multicolumn{3}{c}{BFM} & \multicolumn{3}{c}{CelebA} & \multicolumn{3}{c}{Cats} \\
		\cmidrule(lr){2-4} \cmidrule(lr){5-7} \cmidrule(lr){8-10}
		& FID$\downarrow$ & IS$\uparrow$ & $\Sigma_{t_i} \downarrow$ & FID$\downarrow$ & IS$\uparrow$ & $\Sigma_{t_i} \downarrow$ & FID$\downarrow$ & IS$\uparrow$ & $\Sigma_{t_i} \downarrow$ \\
		\midrule
		GRAF~\cite{schwarz2020graf} & 45.2 & 1.49 & 4.64 & 41.4 & 1.86 & 6.51 & 28.6 & 1.65 & 5.47  \\
		pi-GAN~\cite{chan2020pi} & 16.4 & 2.49 & 8.16 & 15.1 & 2.63 & 14.58 & 16.6 & 2.09 & 5.91 \\
		\midrule
		Ours (w/o $\cL_\text{normal}$)  & 15.6 & 2.85 & 2.66 & 17.0 & 2.29 & 7.15 & 16.1 & 1.92 & 6.06 \\
		Ours (w/o $\cL_\text{opacity}$)  & 17.1 & 2.61 & 3.37 & 14.4 & \textbf{2.91} & 4.94 & 14.3 & 2.35 & 4.42 \\
		Ours & \textbf{15.3} & \textbf{2.89} & \textbf{2.54} & \textbf{14.2} & 2.87 & \textbf{4.58} & \textbf{14.1} & \textbf{2.48} & \textbf{4.28} \\
		\bottomrule
	\end{tabular}
	\label{tab:performance}
	\vspace{-10pt}
\end{table*}

\begin{figure}[t!]
	\centering
	\includegraphics[width=\linewidth]{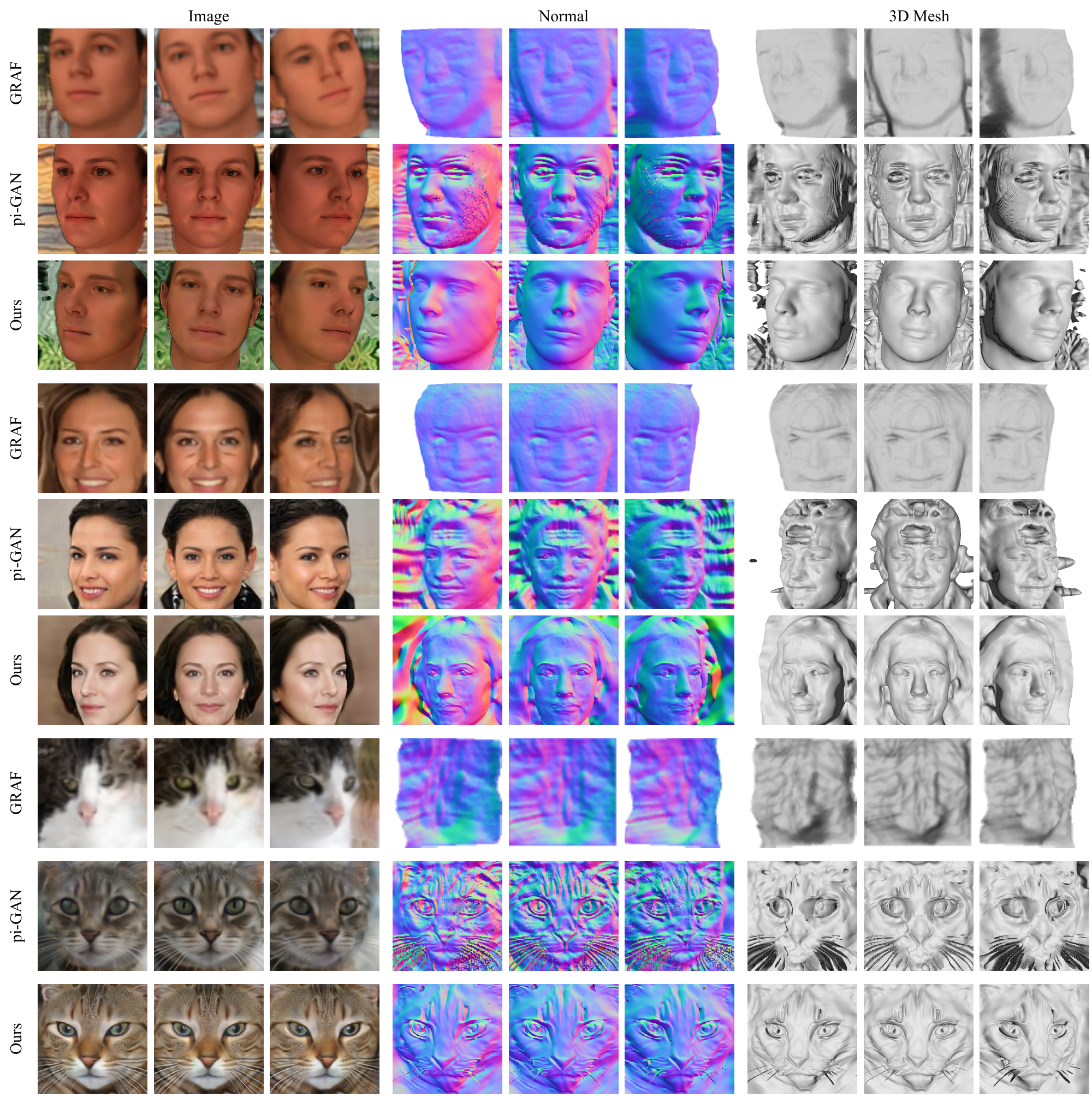}
	\caption{\small{\textbf{Qualitative comparison} on BFM (top), CelebA (middle), and Cats (bottom) datasets. Our method synthesizes realistic images while ensuring compact object surfaces.}}
	\label{fig:qualitative}
	\vspace{-20pt}
\end{figure}

\begin{table}[t!]
	\centering
	\begin{minipage}{0.48\linewidth}
		\centering
		\caption{Comparisons on the compactness and accuracy of learned surfaces.}
		\label{tab:depth_pred}
		\begin{tabular}{l cc}
			\toprule
			Method & SIDE $\downarrow$ & MAD $\downarrow$ \\
			\midrule
			Supervised  & 0.412  & 10.84  \\
			\midrule
			Unsup3d~\cite{wu2020unsupervised}   & 0.795  & 16.51  \\
			GRAF~\cite{schwarz2020graf}        & 1.866  & 26.69  \\
			pi-GAN~\cite{chan2020pi}      & \textbf{0.727}  & 20.46  \\
			GAN2Shape~\cite{pan2020gan2shape}   & 0.759  & 14.94  \\
			Ours        & 0.779  & \textbf{13.81}  \\
			\bottomrule
		\end{tabular}
	\end{minipage}
	\hfill
	\begin{minipage}{0.48\linewidth}
		\centering
		\caption{Comparisons on the geometry properties of learned surfaces. We report \textbf{mean curvature(MC)}($\times 10^{-3}$) and \textbf{mean geodesic distance(MGD)} between random points to assess the geometry properties of recovered surfaces.}
		\label{tab:geometry}
		\begin{tabular}{ll ccc}
			\toprule
			& & BFM & CelebA & Cats \\
			\midrule
			\multirow{2}{*}{MC $\downarrow$} & pi-GAN & 16.84 & 25.94 & 34.05 \\
			& Ours & \textbf{12.25} & \textbf{23.13} & \textbf{30.14} \\
			\midrule
			\multirow{2}{*}{MGD $\downarrow$} & pi-GAN & 0.483 & 0.450 & 0.494 \\
			& Ours & \textbf{0.226} & \textbf{0.231} & \textbf{0.317} \\
			\bottomrule
		\end{tabular}
	\end{minipage}
	\vspace{-0pt}
\end{table}

\begin{figure}[t!]
	\centering
	\includegraphics[width=0.98\linewidth]{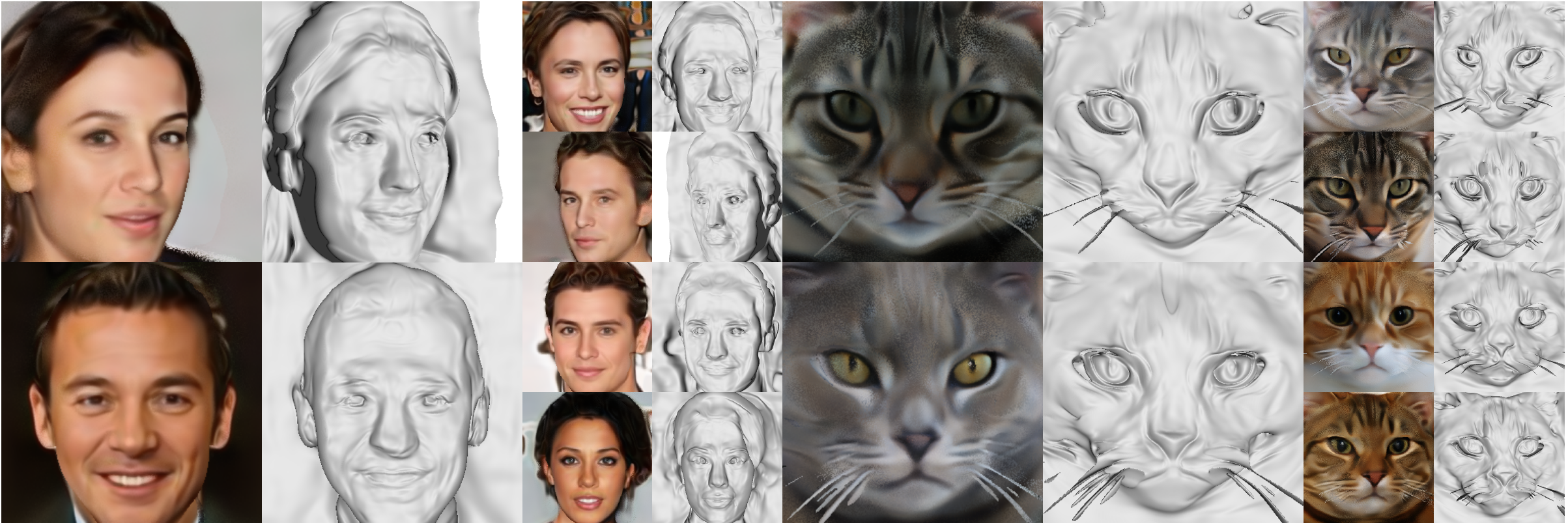}
	\caption{\small{Generated images and their 3D meshes on CelebA and Cats datasets.}}
	\label{fig:samples}
	\vspace{-10pt}
\end{figure}

\textbf{Comparison with baselines.}
To validate the effectiveness of GOF, we compare it with two representative GRAF methods, namely GRAF~\cite{schwarz2020graf} and pi-GAN.
Firstly, Fig.~\ref{fig:qualitative} demonstrates the qualitative comparison between these three methods,
where we include the synthesized images, the learned surfaces in the form of 3D meshes, as well as the corresponding normal maps.
As can be observed, GRAF struggles to render good images,
let alone estimate compact and reasonable underlying surfaces.
Compared to GRAF, pi-GAN can synthesize images and estimate corresponding surfaces with improved quality.
However,
messy parts can be clearly recognized on its learned surfaces and normal maps,
indicating it is incapable of capturing the compact 3D geometric details.
In contrast to both pi-GAN and GRAF,
the proposed GOF is shown to hallucinate realistic images with 3D consistency
and simultaneously learn smooth surface normals as well as compact object surfaces,
which verifies the benefit of adopting the transition from radiance fields to occupancy fields.
More qualitative results of synthesized images and corresponding surfaces are included in Fig.~\ref{fig:samples}.

To quantitatively evaluate the quality of generated images,
we report the Fr\'echet Inception Distance (FID) scores and Inception Score (IS) scores in Table~\ref{tab:performance}.
On these two metrics,
GOF demonstrates substantial improvements over baseline methods.
To further measure the compactness of learned surfaces,
the concentration of color weights $w_i$ as mentioned in Fig.~\ref{fig:teaser}~(a) is also computed.
Specifically,
We sample $N=36$ equally-spaced points $\{ \vx_i=\vo+t_i\vd \}$ within near and far bounds $[t_n, t_f]$ and calculate the corresponding color weights $w_i, i=1,2,...,N$.
Actually, the weighted variance of these samples' depth $t_i$ \emph{reflects} the concentration of color weights in a single image:
$$\Sigma_{t_i} = \frac{N}{(N-1) \sum_{i=1}^{N} w_i} \sum\nolimits_{i=1}^{N} w_i (t_i - \bar{t})^2,
\; \text{where} \; 
\left.\bar{t} = \sum w_i t_i \middle/ \sum w_i.\right .$$
Intuitively,
a smaller variance implies the learned surface is more compact.
Finally,
for each method,
the overall concentration of color weights is averaged over $1000$ randomly synthesized images at the $256 \times 256$ resolution.
The results in terms of this new metric are also included in Table~\ref{tab:performance}.

\begin{figure}[t!]
	\centering
	\includegraphics[width=0.98\linewidth]{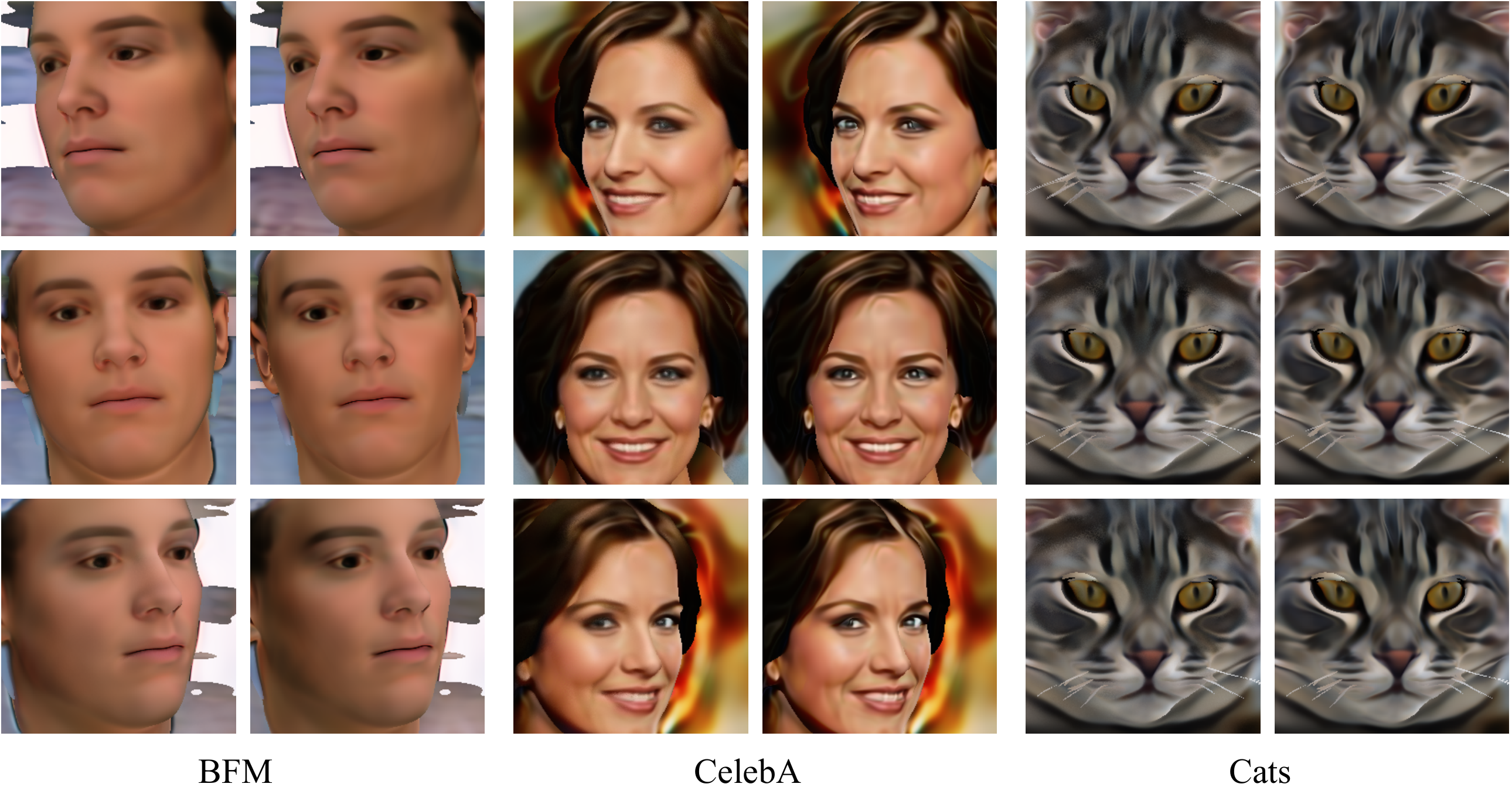}
	\caption{\small{\textbf{Rendering with only surface points.} Images (right) rendered only with surface points are indistinguishable from those (left) obtained with cumulative rendering.}}
	\label{fig:surface_rendering}
	\vspace{-20pt}
\end{figure}

\begin{figure}[!t]
	\centering
	\subfloat[BFM]{
		\begin{minipage}[b]{.48\textwidth}
			\flushright
			\includegraphics[width=1\textwidth]{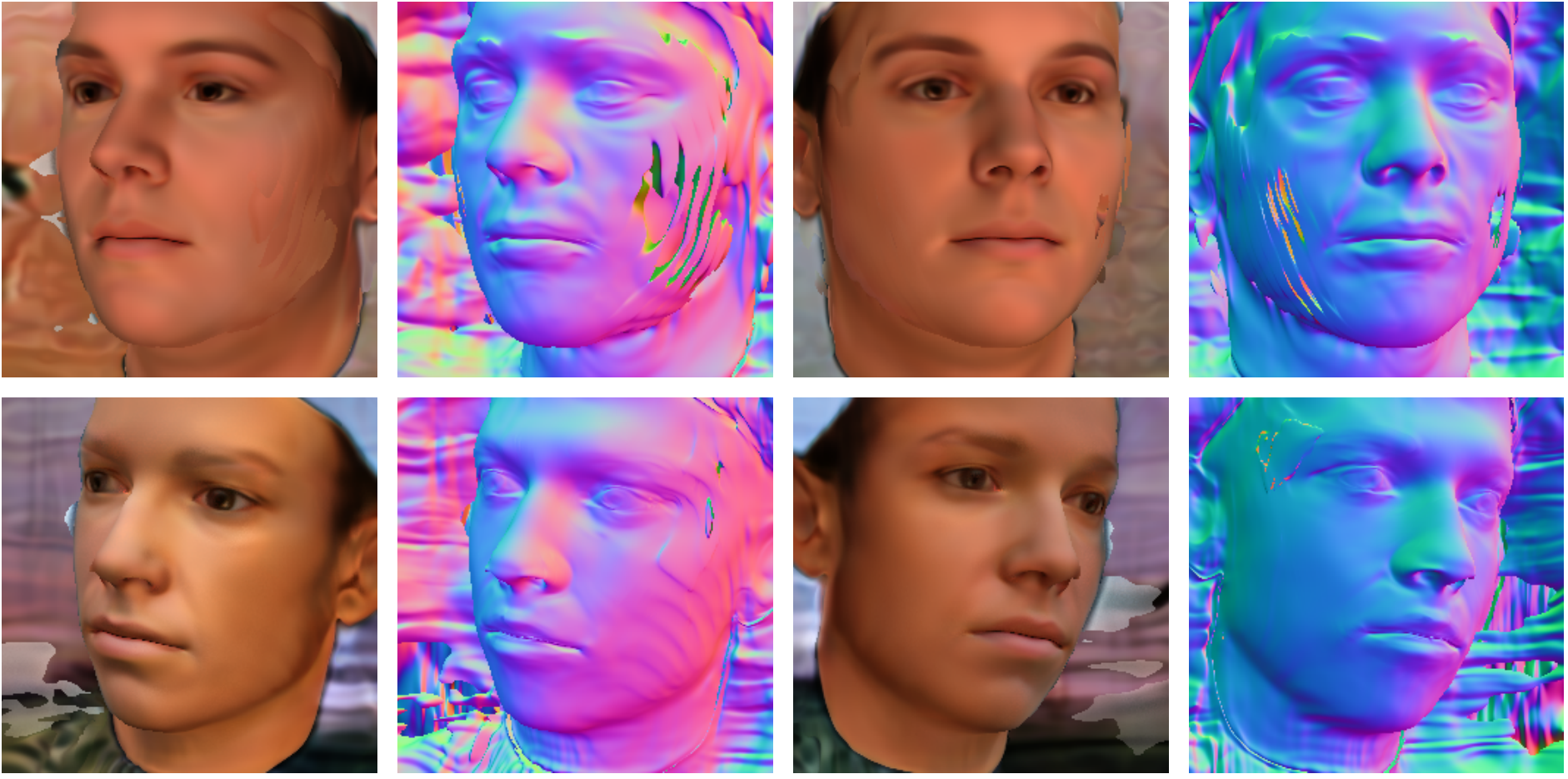}
		\end{minipage}
	}
	\subfloat[CelebA]{
		\begin{minipage}[b]{.48\textwidth}
			\includegraphics[width=1\textwidth]{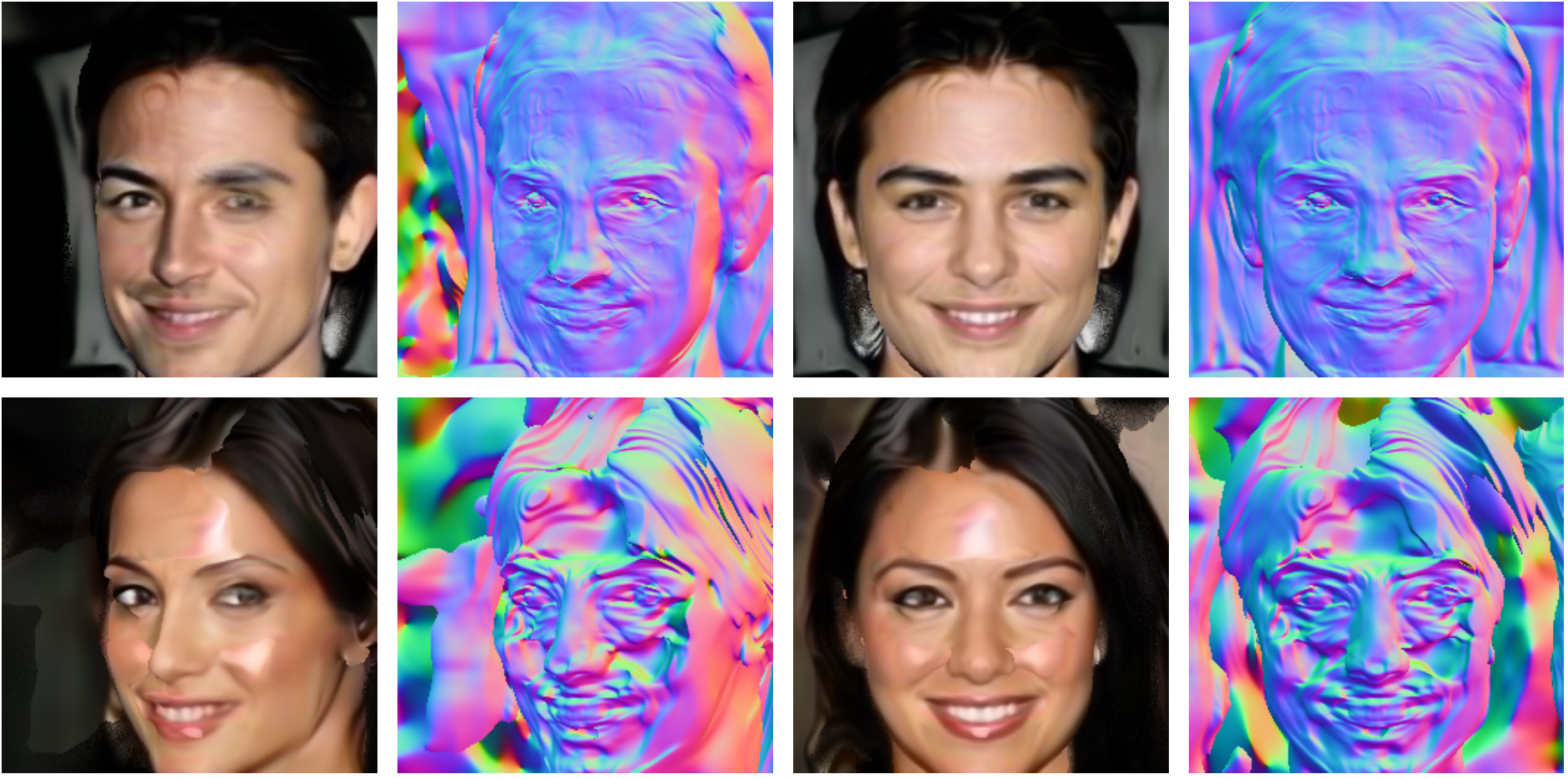}
		\end{minipage}
	}
	\caption{
		\small{\textbf{Qualitative ablation on proposed priors} (upper row w/o $\cL_\text{opacity}$, bottom row w/o $\cL_\text{normal}$)}.
	}
	\label{fig:ablation}
	\vspace{-10pt}
\end{figure}

For the quality of learned surfaces, we first evaluate the compactness and accuracy of surfaces on the BFM dataset,
since it contains ground-truth depth maps.
Specifically,
$50K$ images are generated by each method, together with their corresponding depth maps.
For each method, we train a separate CNN on these generated images and depth maps to predict depths from images.
Subsequently,
we can measure the accuracy of learned surfaces by running the CNN on the test split of BFM and comparing its outputs to the ground-truth depth maps using the scale-invariant depth error (SIDE) and the mean angle deviation (MAD).
While MAD focuses more on the compactness of surfaces,
SIDE emphasizes more on the accuracy of depth.
As shown in Table~\ref{tab:depth_pred},
GOF significantly outperforms baseline methods on the MAD metric
and is comparable to strong baselines on the SIDE metric.
Moreover, we also report mean curvature (\textbf{MC}) and mean geodesic distance (\textbf{MGD}) between random points to assess the geometry properties of learned surfaces.
The lower these two metrics, the smoother recovered object surfaces.
Owing to the absence of such two metrics on ground-truth surfaces for reference,
we consider the smoother surfaces better conform to ground-truth cases.
The reported values on these two metrics are averaged over $100$ randomly synthesized 3D meshes.
Quantitative comparisons in Table~\ref{tab:geometry} demonstrate our method GOF can preserve better geometry properties.

\textbf{Rendering only with surface points.}
As mentioned in Sec.~\ref{sec:gofmethod},
GOF is able to render an image using only the surface points.
To verify this,
we showcase in Fig.~\ref{fig:surface_rendering} images rendered by GOF using multiple points and only the surface point.
As can be observed,
images synthesized with these two strategies are nearly indistinguishable from each other.
Thus,
GOF possesses the potential to significantly reduce the number of generator queries when synthesizing an image.
To compare the efficiency straightforwardly, we estimate the rendering speed of $256 \times 256$ images for both pi-GAN and GOF on a single Intel Xeon(R) CPU. On average, pi-GAN costs about $78$s per image, while GOF takes about $56$s, saving approximately $28\%$ of the time.
Owing to the reduction in the burden of queries,
GOF enables a light rendering scheme that is promising for applications on mobile devices.

\textbf{Ablation studies.}
We here analyze the effects of the proposed regularizations $\cL_\text{normal}$ and $\cL_\text{opacity}$.
Table~\ref{tab:performance} includes the quantitative ablation study on these priors.
We also include qualitative samples in Fig.~\ref{fig:ablation},
which contains images synthesized by GOF without one regularization item.
As shown in the BFM cases~\ref{fig:ablation}(a),
removing opacity prior leads to the smoke-like artifacts around the cheek part
and the absence of normal regularization might degrade the quality of learned normal maps.
While testing on the real-world dataset~\ref{fig:ablation}(b),
undesirable specular highlights emerge on the face and the hollows appear on the corresponding shapes if without the normal regularization.
Moreover, we observe that removing opacity prior on CelebA dataset will make the face surfaces too flat and unnatural.
It is worth noting that although the performance of GOF is deteriorated due to the absence of these priors,
images and surfaces produced by GOF are still of reasonable quality when compared to that from previous approaches,
indicating the transition from radiance fields to occupancy fields is the main cause that leads to the success of GOF.
Moreover, we showcase the degenerated results on BFM dataset if our model is trained without the shrinking process. As illustrated in Fig.~\ref{fig:wo_shrink}, despite the realistic generated images, there emerges random noise on the corresponding normal maps and some nasty dents appear on the face shapes, which demonstrates that the combination of our proposed occupancy representation and the shrinking sampling procedure ensures the surface compactness.

\textbf{Inverse rendering.}
Through GAN inversion, our method is also capable of inverse rendering as shown in Fig.~\ref{fig:inversion}.
Given a real image, GOF can reconstruct the target image successfully and realize free view synthesis by controlling the viewpoints. Besides, the recovered normal maps as well as 3D shapes pave the way for downstream tasks such as relighting and editing. 

\textbf{Relighting.}
In Fig.~\ref{fig:relighting} we provide the relighting results based on the learned normal maps by explicitly controlling the lighting directions.
As our method and baselines can't predict the corresponding albedo,
the face-forwarding image is considered as the pseudo albedo.
Thanks to better learned normal maps, our method GOF presents promising images under different light conditions.
In contrast to ours, baseline methods like pi-GAN~\cite{chan2020pi} tend to generate messy normal maps with obvious checkerboard-like artifacts, leading to noisy and dissatisfied relighting results.

\begin{figure}[t!]
	\centering
	\begin{minipage}{0.34\linewidth}
		\centering
		\includegraphics[width=\linewidth]{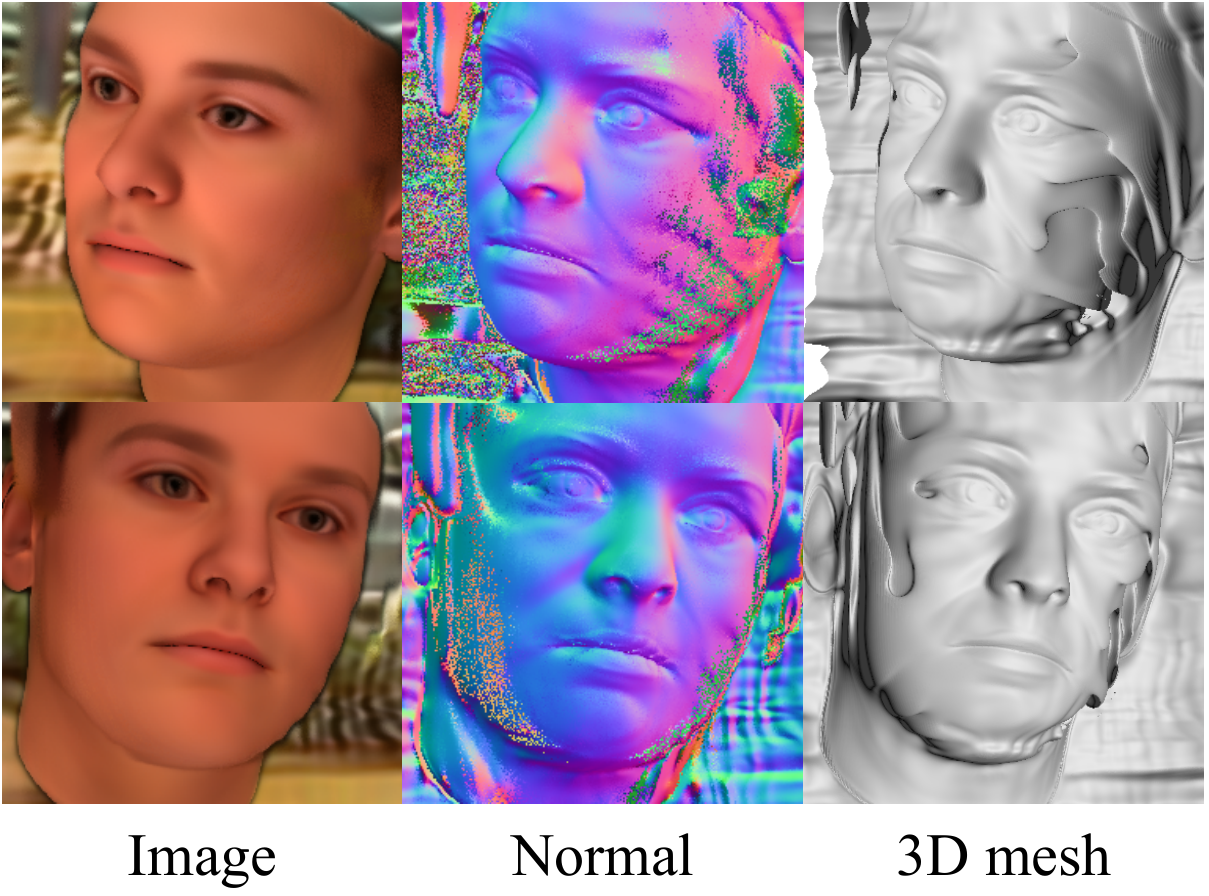}
		\caption{
			\small{\textbf{GOF results without the shrinking process.} Noise emerges on normals and  dents appear on shapes.}
		}
		\label{fig:wo_shrink}
	\end{minipage}
	\hfill
	\begin{minipage}{0.64\linewidth}
		\centering
		\includegraphics[width=\linewidth]{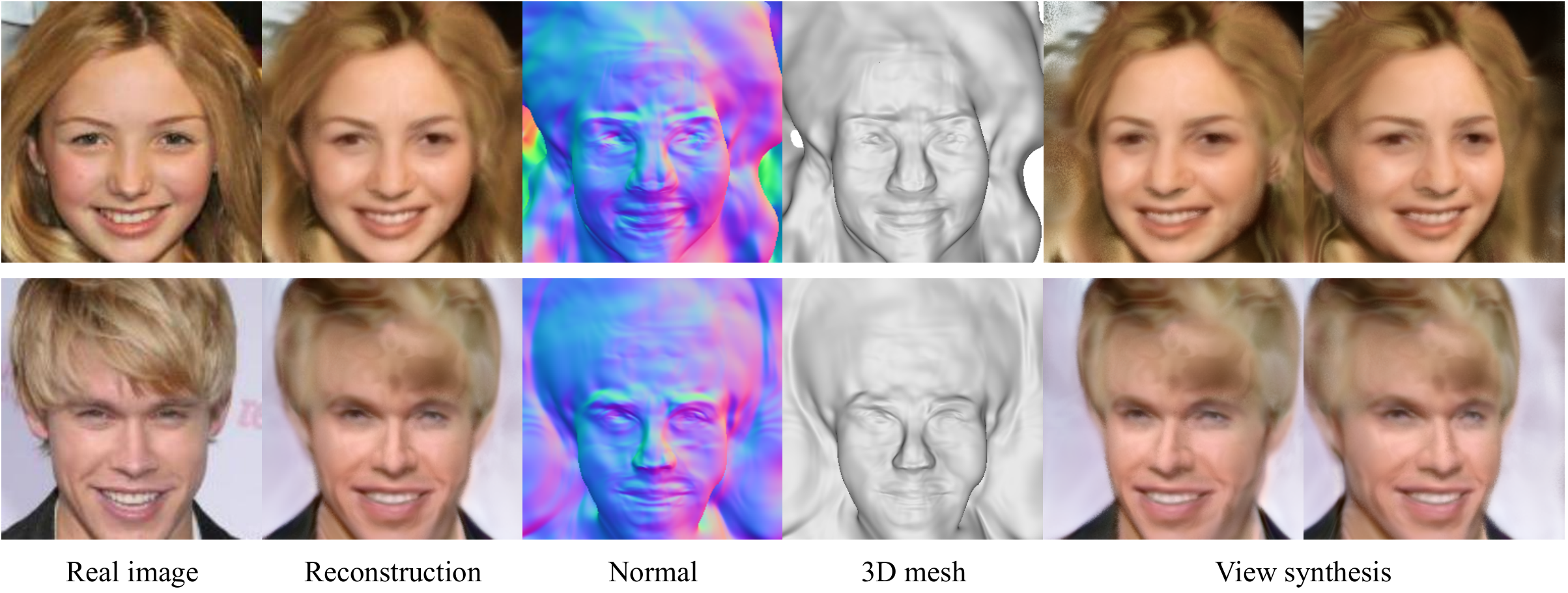}
		\caption{
			\small{\textbf{GAN inversion results on real images.} GOF can reconstruct the target images and simultaneously learn the corresponding normal maps as well as 3D shapes.}
		}
		\label{fig:inversion}
	\end{minipage}
	\vspace{0pt}
\end{figure}

\begin{figure}[t!]
	\centering
	\includegraphics[width=0.99\linewidth]{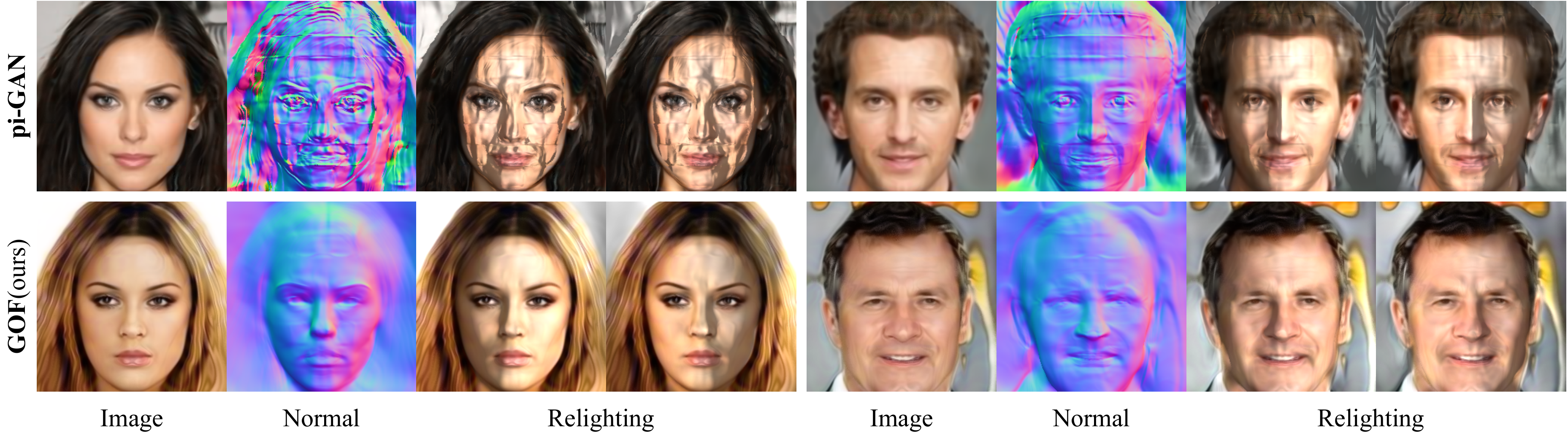}
	\caption{
		\small{\textbf{Relighting results.} Our method GOF generates desirable images under various light conditions while baseline results are far from satisfactory.}
	}
	\label{fig:relighting}
	\vspace{-10pt}
\end{figure}

\textbf{Limitations.}
While training on real-world datasets, our method GOF might present similar dents in the hair regions as in existing approaches~\cite{chan2020pi}.
Besides, the adopted FiLMed-\texttt{SIREN} backbone in the generator will lead to stripe artifacts in the generated images especially when they are rendered only with surface points.
Meanwhile, surface rendering mode will make furry cat images over-smooth and less realistic.
Moreover, our method is more suitable for solid objects with only one surface.


\section{Conclusion}

In this work, we propose generative occupancy fields (GOF), a novel generative radiance fields for 3D-aware image synthesis.
The crux of GOF is a dedicated transition from the cumulative rendering in radiance fields to rendering with only the surface points.
Such a transition is inspired by the resemblance between the alpha values in radiance fields and the occupancy values in occupancy networks,
so that we can reinterpret one as the other.
In practice,
such a transition is achieved during training by gradually shrinking the sampling region in the rendering process of GOF
from the entire volume to a minimal neighboring region around the surface,
where the surface is located via root-finding on predicted alpha values.
Thanks to the transition,
surfaces learned by GOF continuously converge during the training,
ensuring their compactness at the end.
On three diverse datasets,
GOF is shown to demonstrate great superiority in synthesizing 3D consistent images and in the meantime capturing compact surfaces,
significantly broadening the application of generative radiance fields in downstream tasks.

\section*{Acknowlegements}
We would like to thank Eric R. Chan for sharing the codebase of pi-GAN. This work is supported by the Collaborative Research Grant from SenseTime (CUHK Agreement No. TS1712093), the General Research Fund (GRF) of Hong Kong (No. 14205719), the RIE2020 Industry Alignment Fund – Industry Collaboration Projects (IAF-ICP) Funding Initiative, as well as cash and in-kind contribution from the industry partner(s).

\bibliographystyle{ieeetr}
\bibliography{GOF.bib}

\setcounter{section}{0}
\renewcommand{\thesection}{\Alph{section}} 
\newpage
\begin{center}
	{\bf\LARGE - Appendix -}
\end{center}
\vspace{20pt}


Here we provide implementation details, additional results on CARLA dataset, and proof of the equivalence between two rendering schemes when $\Delta_\text{min} \rightarrow 0$.
A brief discussion on the future works and broader impacts is also included.
Our code and models are available at \url{https://github.com/SheldonTsui/GOF_NeurIPS2021}.

\appendix
\section{Model Details}

Following StyleGAN~\cite{karras2019style},
the mapping network is an MLP with three hidden layers of 256 units each.
Besides, we leverage the FiLMed-\texttt{SIREN}~\cite{sitzmann2020implicit} module as the backbone for the generator $G_\theta$~\cite{chan2020pi}.
On the head of predicting $\alpha(\vx, \vz)$, a sigmoid function is included to ensure the value range.

Similar to pi-GAN~\cite{chan2020pi}, our discriminator $D_\theta$ grows progressively as training goes.
The resolution of training images is initially set as $32 \times 32$ and doubled twice during training, up to $128 \times 128$.
Apart from discriminating the generated images,
the discriminator $D_\theta$ will additionally predict the corresponding latent code $\hat{\vz}$ and the camera pose $\hat{\xi}$,
which will be used to compare with the ground-truth values as additional losses.

\section{Additional Training Details}

For all datasets used in the experiments, we assume a pinhole perspective camera with a field of view of $12^{\circ}$.
During training, we sample camera poses $\xi$ from a Gaussian distribution $p_\xi$ for BFM and CelebA dataset.
For Cats dataset, a uniform distribution is leveraged as the setting in pi-GAN~\cite{chan2020pi}.
During training, the opacity coefficient $\lambda_\text{opacity}$ will grow monotonically with an exponential rate $\gamma_\text{opac}$
following $\lambda_\text{opacity} = \min(\lambda_\text{opac\_init} \cdot \exp(n \gamma_\text{opac}), 10)$.
When computing the surface normals, we set the Euclidean norm of the small random 3D perturbation $\vepsilon$ as $0.01$.
Besides, we find the hierarchical sampling is still effective in our method.
For a fair comparison with baseline methods, we uniformly set the number of bins in root-finding $M$ to 9, set the number of coarse samples $N_\text{coarse}$ to 9 and set the number of fine samples $N_\text{fine}$ to 6 in our method.
In Table~\ref{tab:hyperparameters}
we include the values of important dataset-dependent hyperparameters of GOF.

\begin{table*}[!h]
	\centering
	\caption{
		The setting of several important dataset-dependent hyperparameters.
	}
	\begin{tabular}{l cccc ccc cc}
		\toprule
		dataset & $\gamma$ & $t_n$ & $t_f$ & $\Delta_\text{min}$ & $\lambda_\text{normal}$ & $\lambda_\text{opac\_init}$ & $\gamma_\text{opac} $ & $\sigma_v$ & $\sigma_h$ \\
		\midrule
		BFM & $4.0 \times 10^{-5}$ & 0.88 & 1.12 & 0.01 & 0.002 & 0.1 & $ 4.0 \times 10^{-5}$ & 0.155 & 0.3 \\
		CelebA & $1.0 \times 10^{-5}$ & 0.88 & 1.12 & 0.03 & 0.05 & 0.01 & $ 0.5 \times 10^{-5}$ & 0.155 & 0.3 \\
		Cats & $2.0 \times 10^{-5}$ & 0.8 & 1.2 & 0.1 & 0.05 & 0.02 & $ 1.0 \times 10^{-5}$ & 0.4 & 0.5 \\
		\bottomrule
	\end{tabular}
	\label{tab:hyperparameters}
\end{table*}

Our models are trained on 8 TITAN XP GPUs on all datasets.
The whole training process on BFM, CelebA and Cats takes about 26 hours, 66 hours and 12 hours respectively.
To avoid the hollow face illusion~\cite{niemeyer2021campari},
the training of all models starts from an early (about 2K iterations) pretrain model with the correct outward-facing faces.
Owing to the change of image resolution during training,
the corresponding batch size and learning rate will be adjusted accordingly.
In Table~\ref{tab:adjust_parameters} we list the values of these hyperparameters across different datasets.

\begin{table*}[!h]
	\centering
	\caption{
		The setting of several hyperparameters to be adjusted during training.
	}
	\begin{tabular}{ccc cccc}
		\toprule
		\multicolumn{3}{c}{Training Stage (iterations)} &&&& \\
		\cmidrule(lr){1-3}
		BFM & CelebA & Cats & batch size & resolution & $lr(G_\theta)$ & $lr(D_\theta)$ \\
		\midrule
		0 $\sim$ 10K & 0 $\sim$ 20K & 0 $\sim$ 5K         & 128 & 32 & $5.0 \times 10^{-5}$ & $2.0 \times 10^{-4}$ \\
		10K $\sim$ 60K & 20K $\sim$ 160K & 5K $\sim$ 30K   & 64 & 64 & $5.0 \times 10^{-5}$ & $2.0 \times 10^{-4}$ \\
		60K $\sim$ 80K & 160K $\sim$ 200K & 30K $\sim$ 40K & 32 & 128 & $4.0 \times 10^{-6}$ & $2.0 \times 10^{-5}$ \\
		\bottomrule
	\end{tabular}
	\label{tab:adjust_parameters}
\end{table*}

Due to the absence of root-finding~\cite{niemeyer2020differentiable},
baseline methods such as GRAF~\cite{schwarz2020graf} and pi-GAN~\cite{chan2020pi} have to regard the weighted depth in the cumulative rendering process as the final predicted depth.
For a specific ray $\vr = \vo + t\vd$ with $N$ sampled points $\{\vx_i = \vo + t_i \vd\}$, the depth $\bar{t_s}$ is estimated as follows:
\begin{equation}
\bar{t_s} = \sum_{i=1}^{N} w_i t_i = \sum_{i=1}^{N} \exp \big(-\sum_{j<i} \sigma_\theta(\vx_j, \vz) \delta_j \big) \big( 1 - \exp(-\sigma_\theta(\vx_i, \vz) \delta_i) \big) t_i.
\end{equation}

\section{Equivalence Proof}
As mentioned in Sec. 3.2, we include two different rendering schemes during inference.
We here demonstrate the equivalence of these two schemes when $\Delta_\text{min} \rightarrow 0$.
For each ray $\vr = \vo + t\vd$, the surface point $\vx_s = \vo + t_s \vd$ will be firstly determined via root-finding.
For the rendering with Eq. 4 of the main paper,
we will sample $N$ points $\{\vx_i = \vo + t_i \vd; i=1,2,...,N\}$ within the minimal region around the surface $[t_s - \Delta_\text{min}, t_s + \Delta_\text{min}]$.
Therefore, the cumulative color on the ray $\vr$ can be represented as follows:
\begin{equation}
\hat{\mC}_c (\vr) = \sum_{i=1}^{N} \alpha_\theta(\vx_i) \prod_{j<i} \big(1 - \alpha_\theta(\vx_j) \big) \vc_\theta(\vx_i, \vd),
\label{eq:cumulative}
\end{equation}
where the latent code $\vz$ is omitted for brevity.
\\
In the implementation, we force the sum of color weights
$w_i = \alpha_\theta(\vx_i) \prod_{j<i} \big(1 - \alpha_\theta(\vx_j) \big)$
to be 1 by letting $w_N = 1 - \sum_{j=1}^{N-1} w_j$.
Hence, Eq.~\ref{eq:cumulative} can be reformulated to:
\begin{equation}
\hat{\mC}_c (\vr) = \sum_{i=1}^{N-1} \alpha_\theta(\vx_i) \prod_{j<i} \big(1 - \alpha_\theta(\vx_j) \big) \big(\vc_\theta(\vx_i, \vd) - \vc_\theta(\vx_N, \vd)\big)
+ \vc_\theta(\vx_N, \vd).
\end{equation}
For the rendering only with surface points, we have rendered color as $\hat{\mC}_s (\vr) = \vc_\theta(\vx_s, \vd)$.
Without loss of generality, we just consider the case of single color channel, \ie, $c_\theta(\vx, \vd) \in \Rbb$.

\begin{theorem}
	Assuming the color is predicted by the Multilayer Perceptrons with SIREN or ReLU activation functions, we have
	\begin{equation}
	\lim\limits_{\Delta_\text{min} \rightarrow 0} \hat{\mC}_c (\vr) = \hat{\mC}_s (\vr).
	\end{equation}
\end{theorem}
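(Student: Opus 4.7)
The plan is to exploit continuity of the color MLP together with the reformulation of $\hat{\mC}_c(\vr)$ in terms of color differences relative to $\vc_\theta(\vx_N, \vd)$. Since every sample point $\vx_i$ lies in $[\vx_s - \Delta_\text{min}\vd,\, \vx_s + \Delta_\text{min}\vd]$, shrinking $\Delta_\text{min}$ to $0$ forces each $\vx_i$ to converge to the surface point $\vx_s$. The rewritten expression isolates a ``boundary'' term $\vc_\theta(\vx_N, \vd)$ outside the sum and collects everything else into weighted differences $\vc_\theta(\vx_i, \vd) - \vc_\theta(\vx_N, \vd)$; if I can make each such difference vanish uniformly, the whole sum collapses and only the boundary term survives, which in the limit equals $\vc_\theta(\vx_s, \vd) = \hat{\mC}_s(\vr)$.

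First I would verify continuity of $\vc_\theta(\cdot, \vd)$ as a function of the spatial input. For SIREN this is immediate since $\sin$ is smooth and compositions of smooth functions are smooth; for ReLU it follows because ReLU is Lipschitz and a finite composition of affine maps and Lipschitz nonlinearities is continuous (in fact Lipschitz). Next I would observe that on the sampling interval $[t_s - \Delta_\text{min},\, t_s + \Delta_\text{min}]$, the distance $\|\vx_i - \vx_s\|$ is bounded by $\Delta_\text{min}\|\vd\|$, so $\vx_i \to \vx_s$ for every $i \in \{1,\ldots,N\}$ as $\Delta_\text{min} \to 0$. By continuity, $\vc_\theta(\vx_i, \vd) \to \vc_\theta(\vx_s, \vd)$ for each $i$, and in particular $\vc_\theta(\vx_N, \vd) \to \vc_\theta(\vx_s, \vd)$.

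Then I would bound the sum. Each coefficient $w_i = \alpha_\theta(\vx_i)\prod_{j<i}(1-\alpha_\theta(\vx_j))$ lies in $[0,1]$ because $\alpha_\theta \in [0,1]$ (enforced by the sigmoid head), and the $w_i$ sum to at most $1$. Hence
\begin{equation*}
\left| \sum_{i=1}^{N-1} w_i \bigl(\vc_\theta(\vx_i,\vd) - \vc_\theta(\vx_N,\vd)\bigr) \right|
\leq \max_{1 \leq i \leq N-1} \bigl|\vc_\theta(\vx_i,\vd) - \vc_\theta(\vx_N,\vd)\bigr|,
\end{equation*}
and the right-hand side tends to $0$ since both $\vc_\theta(\vx_i,\vd)$ and $\vc_\theta(\vx_N,\vd)$ converge to the same limit $\vc_\theta(\vx_s,\vd)$. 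Combining this vanishing sum with the convergence of the isolated term $\vc_\theta(\vx_N,\vd) \to \vc_\theta(\vx_s,\vd)$ yields $\lim_{\Delta_\text{min}\to 0}\hat{\mC}_c(\vr) = \vc_\theta(\vx_s,\vd) = \hat{\mC}_s(\vr)$.

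The argument is essentially a continuity-plus-squeeze exercise once the reformulation that pulls $\vc_\theta(\vx_N,\vd)$ out of the sum is in hand; no real obstacle arises beyond justifying continuity of the MLP for the two activation families mentioned and noting that the $w_i$ form a (sub-)probability vector. The only subtle point worth stating carefully is that the conclusion requires the surface-finding procedure to return a consistent $\vx_s$ in the limit (which it does, since root-finding on the continuous function $\alpha_\theta$ yields a fixed crossing point independent of $\Delta_\text{min}$), so that the limits of $\hat{\mC}_c$ and $\hat{\mC}_s$ are taken at the same point.
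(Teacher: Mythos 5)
Your proof is correct and follows essentially the same route as the paper's: the same reformulation pulling $\vc_\theta(\vx_N,\vd)$ out of the sum, Lipschitz continuity of the SIREN/ReLU MLP to control $|\vc_\theta(\vx_i,\vd)-\vc_\theta(\vx_s,\vd)|$ by $\Delta_\text{min}$, and boundedness of the weights to collapse the sum. Your bound via $\sum_i w_i \le 1$ is marginally sharper than the paper's cruder $(N-1)\max$ factor, but this is a cosmetic refinement rather than a different argument.
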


\begin{proof}
Note that, Linear layers, SIREN and ReLU activation functions as well as encoding function in the positional encoding are all Lipschitz continuous thus:
\begin{equation}
|c_\theta(\vx_i, \vd) - c_\theta(\vx_s, \vd)| \le k'_c || \vx_i - \vx_s ||_2 \le k_c \Delta_\text{min}.
\end{equation}

Moreover, we further omit $\vd$ in $c_\theta(\vx, \vd)$ and have:
\begin{align}
\left| \hat{\mC}_c (\vr) - \hat{\mC}_s (\vr) \right|
&= \left|\sum_{i=1}^{N-1} \alpha_\theta(\vx_i) \prod_{j<i} \big(1 - \alpha_\theta(\vx_j) \big) \big(c_\theta(\vx_i) - c_\theta(\vx_N)\big)
+ c_\theta(\vx_N) - c_\theta(\vx_s) \right| \\
&\le \sum_{i=1}^{N-1} \alpha_\theta(\vx_i) \prod_{j<i} \big(1 - \alpha_\theta(\vx_j) \big) |c_\theta(\vx_i) - c_\theta(\vx_N)|
+ |c_\theta(\vx_N) - c_\theta(\vx_s)| \\
&\le (N-1) |c_\theta(\vx_i) - c_\theta(\vx_N)| + |c_\theta(\vx_N) - c_\theta(\vx_s)| \label{eq:middle} \\
&< 2 k_c N \Delta_\text{min}.
\end{align}
where inequality~\ref{eq:middle} holds by $0 \le \alpha_\theta(\vx) \le 1$.
Therefore, for any $\epsilon > 0$, we set
$\Delta_\text{min} = \epsilon/2 k_c N$ and have:
$$\left| \hat{\mC}_c (\vr) - \hat{\mC}_s (\vr) \right| < 2 k_c N \Delta_\text{min} = \epsilon. $$
\qedhere
\end{proof}

\section{Additional Results on CARLA}
As presented in GRAF~\cite{schwarz2020graf} and pi-GAN~\cite{chan2020pi},
baselines have already demonstrated remarkable results for both synthesized images and corresponding shapes on CARLA dataset.
We also implement our approach GOF on this synthetic dataset and achieve comparable performance in terms of the image quality as provided in Table~\ref{tab:carla}.
Despite the satisfying images, baseline methods sometimes generate nasty car shapes with dents on the bonnet. Fig~\ref{fig:car_results} shows such shape artifacts in the normal and depth maps.
By contrast, our method can not only synthesize realistic images but also learn good shapes.
In the experiments, the aforementioned shrinking process will lead to undesirable occupancy outside the cars and thus be removed here.

\begin{table*}[!h]
	\centering
	\caption{\textbf{Quantitative results} on CARLA dataset, on five different metrics, FID($128 \times 128$ px), IS, $\Sigma_{t_i}(\times 10^{-4})$, MC and MGD.
	}
	\begin{tabular}{l ccccc}
		\toprule
		& FID$\downarrow$ & IS$\uparrow$ & $\Sigma_{t_i} \downarrow$ & MC$\downarrow$ &  MGD$\downarrow$ \\
		\midrule
		GRAF~\cite{schwarz2020graf} & 37.2 & 3.89 & 0.93 & 13.11 & 0.866 \\
		pi-GAN~\cite{chan2020pi} & 29.6 & \textbf{4.35} & 1.74 & 13.07 & 0.874 \\
		Ours & \textbf{29.3} & 4.29 & \textbf{0.61} & \textbf{12.49} & \textbf{0.831} \\
		\bottomrule
	\end{tabular}
	\label{tab:carla}
	\vspace{0pt}
\end{table*}

\begin{figure}[!h]
	\centering
	\includegraphics[width=0.95\linewidth]{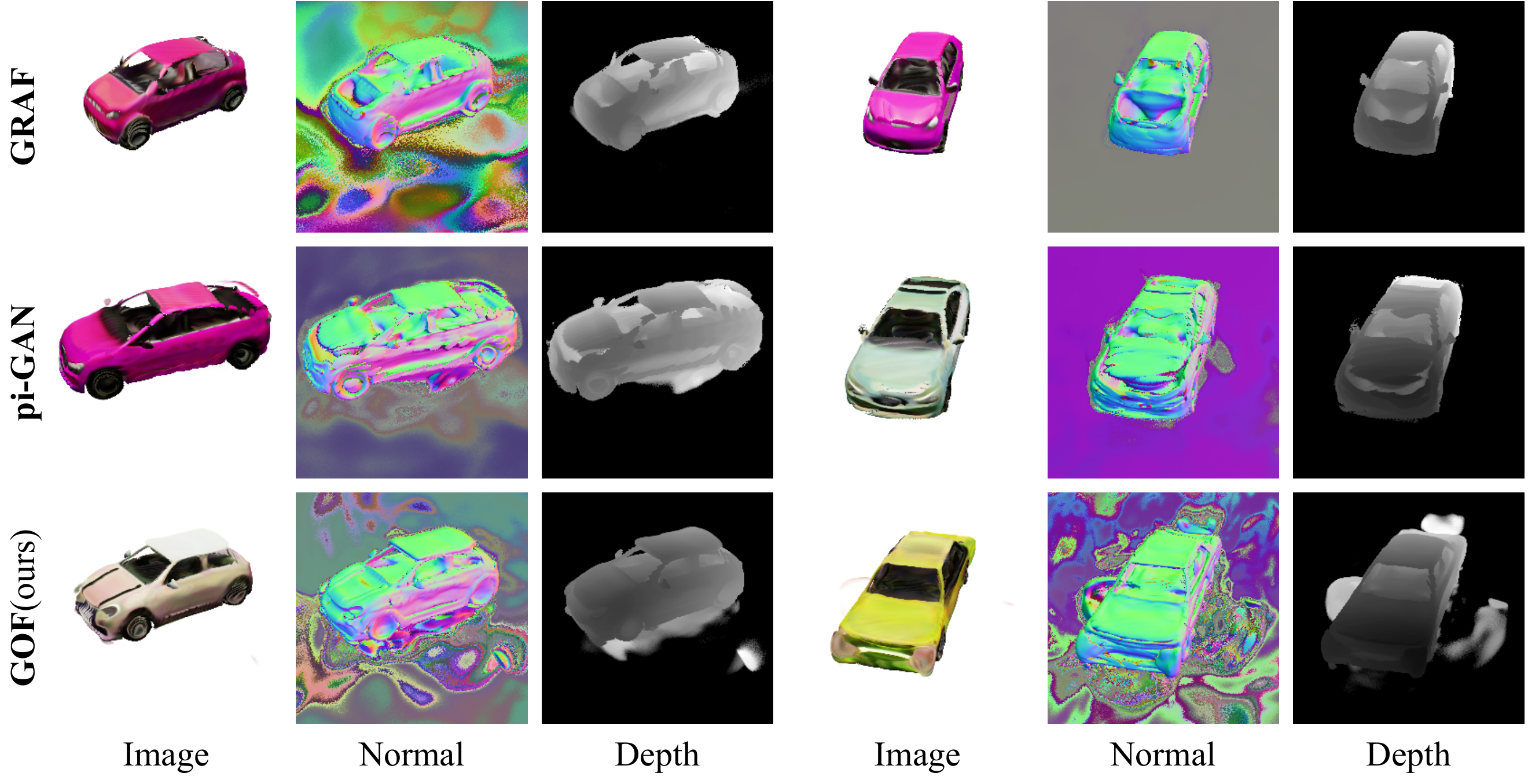}
	\caption{Qualitative comparison on CARLA dataset. Baseline methods predict dents on the car bonnets while ours avoids this issue successfully.}
	\label{fig:car_results}
\end{figure}

\section{Future Works}
In the experiments, we discover the trade-off between the FID score and shapes.
Following the official code\footnote{\url{https://github.com/marcoamonteiro/pi-GAN}} of pi-GAN,
we can increase the learning rate and gradient clip range, decrease the R1 regularization on the discriminator, and replace the progressive discriminator to achieve a lower FID score.
However, the corresponding shapes will degenerate under this circumstance.
We identify exploring how to get rid of such a trade-off as promising future work.
Moreover, baseline methods including ours struggle to recover the eyes geometry especially on CelebA dataset.
Firstly, the light field of the eyes is more complicated than in other regions.
More importantly, the dataset is biased, where people always gaze at the camera when taking photos,
the biased eye poses are inadequate to provide multi-view information for modeling eyes accurately.
It's also an interesting problem to be mitigated in the future.

\section{Broader Impacts}
Our work aims at generating images in a 3D consistent manner and simultaneously learn compact and smooth object surfaces.
Its application lies mainly in entertainment industries such as AR/VR or video games.
However, our framework may be potentially used in the face forgery like DeepFake.
Also, computational cost as well as energy consumption should also be considered during the development of such systems for environmental protection.

\section{Additional Qualitative Results}
In Fig.~\ref{fig:supp_qualitative} we include more qualitative results generated by the proposed GOF.
Fig.~\ref{fig:supp_surface_rendering} shows that GOF can render high-quality images using only the surface points.
In Fig.~\ref{fig:interp} we present the linearly interpolating results between two latent codes on CelebA~\cite{liu2015deep} and Cats~\cite{zhang2008cat} respectively.
Moreover, we provide a demo video to demonstrate that the proposed GOF is capable of generating realistic images in a 3D-consistent manner and simultaneously capturing compact object surfaces.

\begin{figure}[t!]
	\centering
	\includegraphics[width=0.98\linewidth]{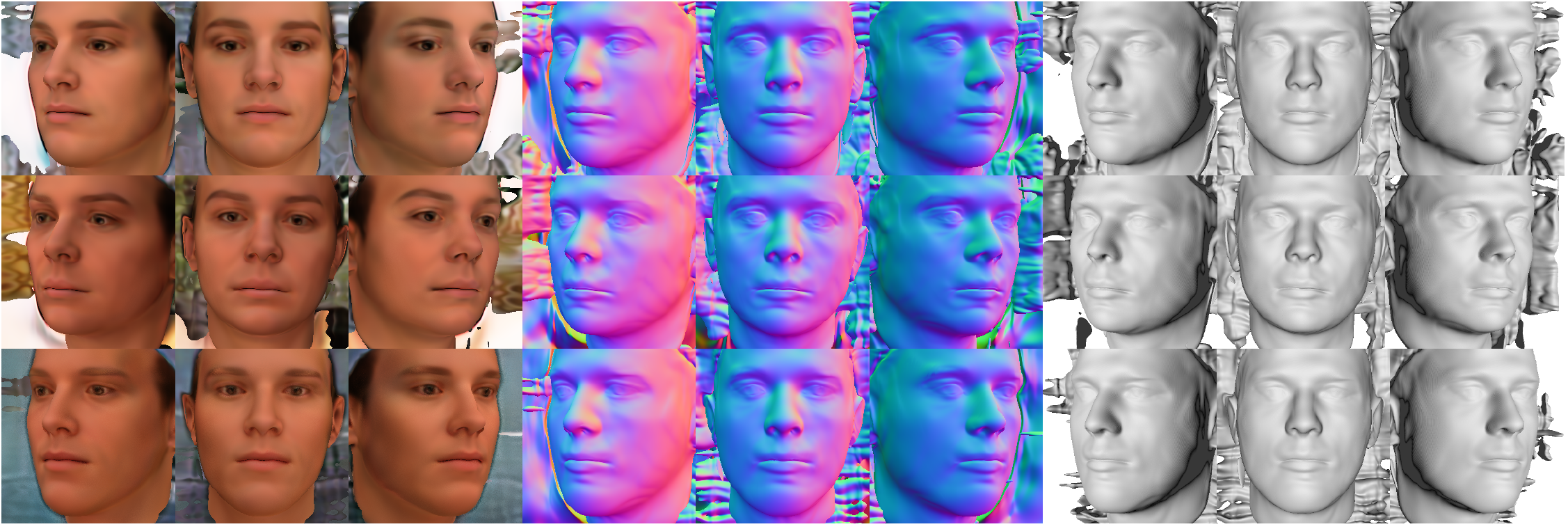}
	\centering
	\includegraphics[width=0.98\linewidth]{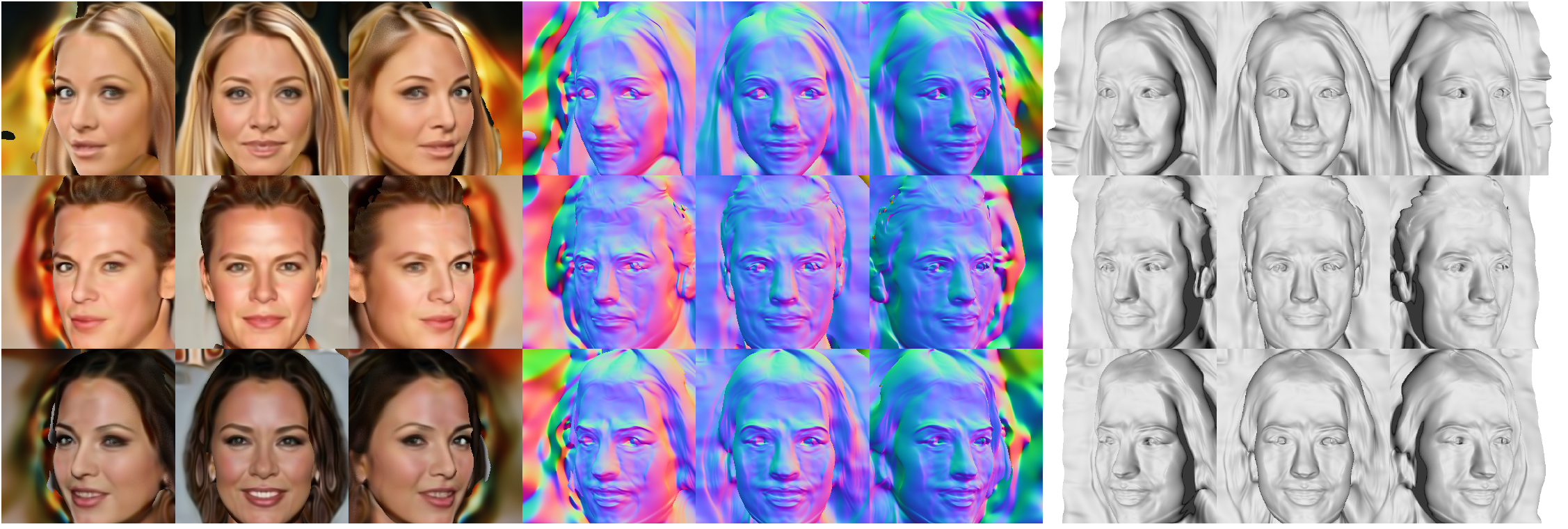}
	\centering
	\includegraphics[width=0.98\linewidth]{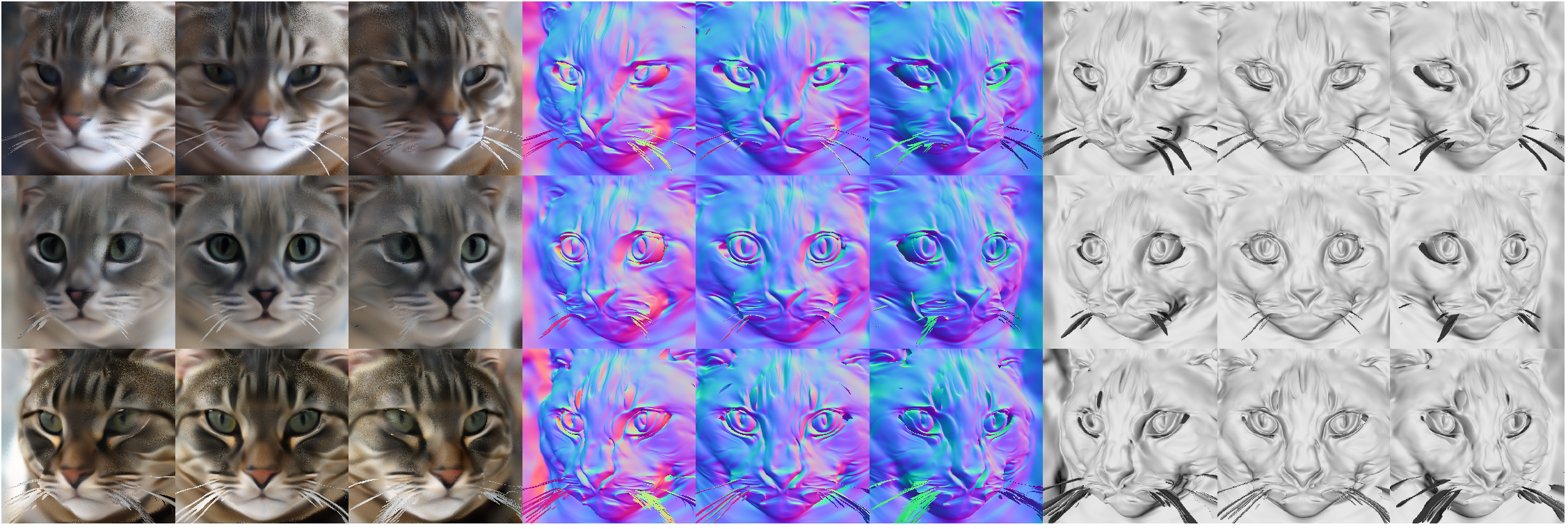}
	\caption{More qualitative results from our model GOF trained on BFM (top), CelebA (middle), and Cats (bottom) datasets.}
	\label{fig:supp_qualitative}
\end{figure}

\begin{figure}[t!]
	\centering
	\includegraphics[width=0.98\linewidth]{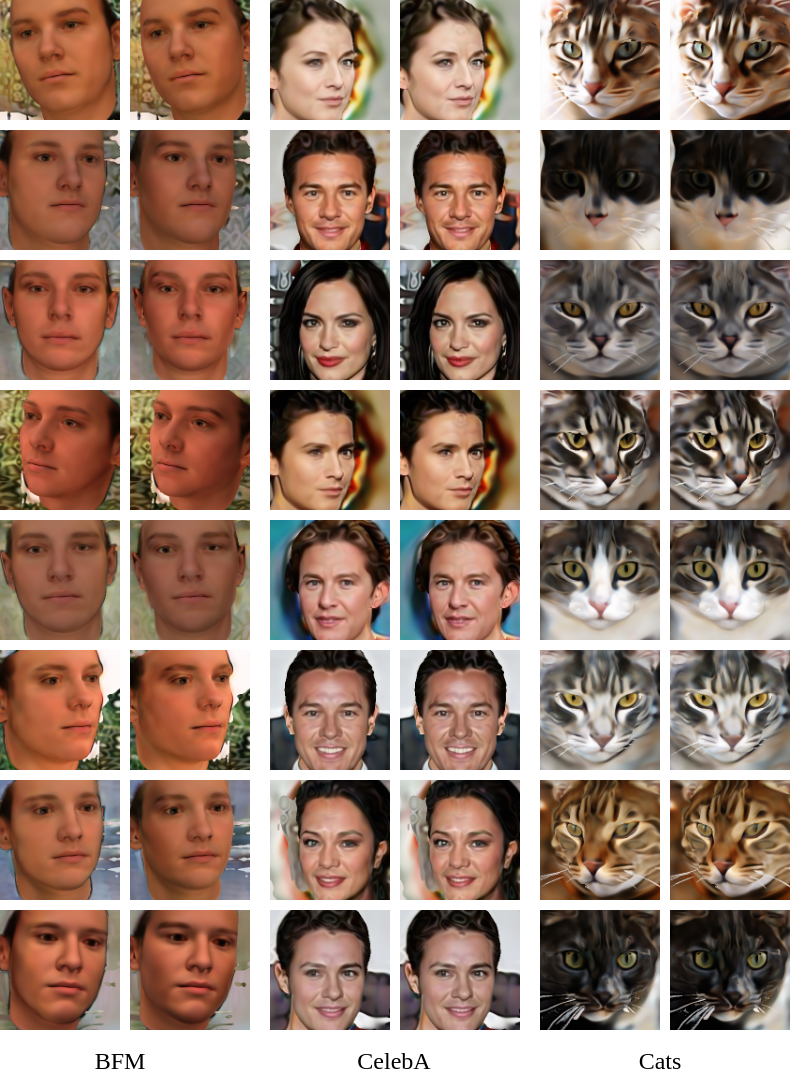}
	\caption{\textbf{Rendering only with surface points.} We provide more images rendered only with surface points (right), which are almost indistinguishable from those obtained with cumulative rendering (left).}
	\label{fig:supp_surface_rendering}
\end{figure}

\begin{figure}[t!]
	\centering
	\includegraphics[width=0.98\linewidth]{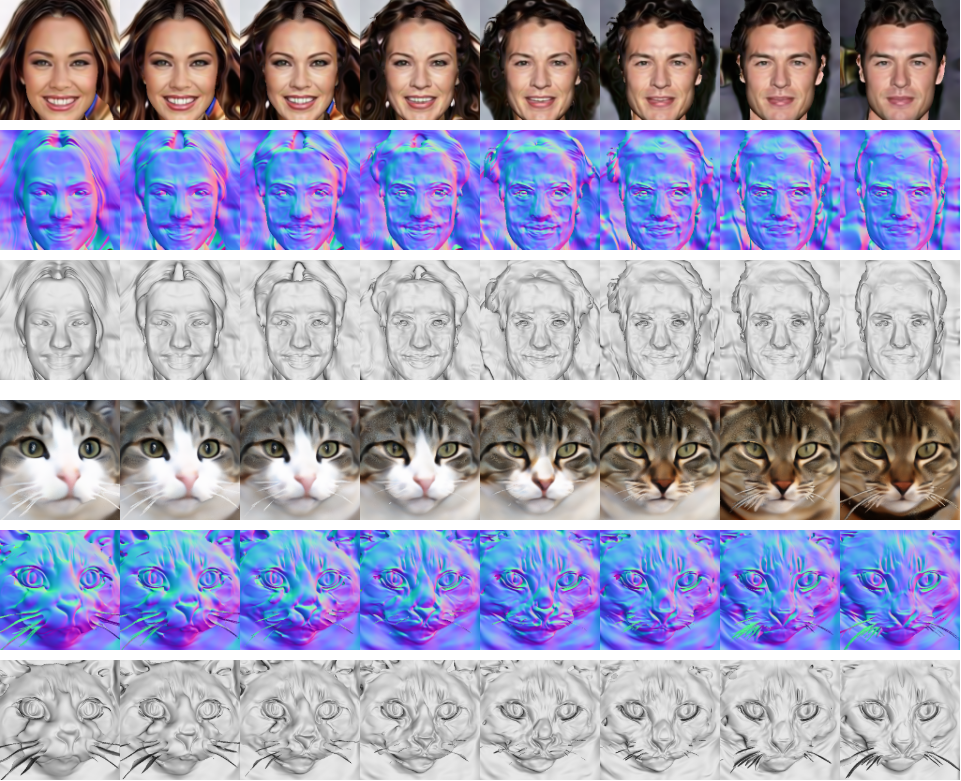}
	\caption{Linearly interpolating between two latent codes on CelebA and Cats datasets.}
	\label{fig:interp}
\end{figure}

\end{document}